\documentclass[letterpaper, 10 pt, conference]{ieeeconf}  

\IEEEoverridecommandlockouts                              

\usepackage{amsmath,amsfonts,amssymb,color,amsthm}
\usepackage{nccmath}
\usepackage{epsfig}
\usepackage{psfrag}
\usepackage{algorithm}
\usepackage{algpseudocode}
\usepackage{array}
\usepackage{epstopdf}
\usepackage{cite}
\usepackage{medmath}
\usepackage{footmisc}
\usepackage{mathtools}
\usepackage{bbm}
\usepackage{bm}
\usepackage{comment}
\usepackage{dsfont}

\usepackage{mathtools}
\usepackage{epstopdf}
\usepackage{tikz}
\usetikzlibrary{automata, positioning, arrows.meta}
\usepackage{relsize}
\usetikzlibrary{shapes,arrows}
\usepackage{cite}
\usepackage{epstopdf}

\definecolor{winered}{rgb}{0.5,0,0}
\usepackage{scalerel}
\usepackage{xcolor}
\usepackage[colorlinks]{hyperref}

\newcommand{\mc}{\mathcal}

\AtBeginDocument{%
  \hypersetup{
    citecolor=blue,
    linkcolor=winered}}
\DeclareMathOperator*{\argmax}{\arg\!\max}

\newcommand{\E}{\mathbb{E}}

\newcommand{\cA}{\mathcal{A}}

\newcommand{\cE}{\mathcal{E}}

\newcommand{\cG}{\mathcal{G}}

\newcommand{\cM}{\mathcal{M}}
\newcommand{\cN}{\mathcal{N}}
\newcommand{\cO}{\mathcal{O}}
\newcommand{\cP}{\mathcal{P}}

\newcommand{\cR}{\mathcal{R}}
\newcommand{\cS}{\mathcal{S}}
\newcommand{\cT}{\mathcal{T}}

\newcommand{\cV}{\mathcal{V}}

\newcommand{\bracket}[1]{\left[ #1\right]}
\newcommand{\norm}[1]{\left\| #1\right\|_{\infty}}

\newcommand{\bigot}[1]{\tilde\cO\left( #1\right)}

\newtheorem{theorem}{Theorem}

\newtheorem{lemma}{Lemma}
\newtheorem{remark}{Remark}

\newtheorem{assumption}{Assumption}

\newtheorem{fact}{Fact}

\DeclarePairedDelimiter\ceil{\lceil}{\rceil}

\def\BibTeX{{\rm B\kern-.05em{\sc i\kern-.025em b}\kern-.08em
    T\kern-.1667em\lower.7ex\hbox{E}\kern-.125emX}}

\newcommand{\stackref}[2]{
\readlist*\mylist{#1}
\stackrel{\mbox{\footnotesize\foreachitem\x\in\mylist[]{\ifnum\xcnt=1\else,\fi\eqref{\x}}}}{#2}
}

\title{\LARGE \bf Variance-Reduced Q-Learning over Static and Time-Varying Networks
}

\author{Sreejeet Maity\textsuperscript{\textdagger,*},
        Feng Zhu\textsuperscript{\textdagger,*},
        Aritra Mitra\textsuperscript{*},
        and Robert W. Heath Jr.
\thanks{\textsuperscript{$\dagger$}Equal Contribution. 
\textsuperscript{*}The authors are with the Department of Electrical and Computer Engineering, North Carolina State University. Email: \texttt{\{smaity2,fzhu5,amitra2\}@ncsu.edu}. 
Robert W. Heath Jr. is with the Department of Electrical and Computer Engineering, University of California San Diego, San Diego, USA. Email: \texttt{rwheathjr@ucsd.edu}.}
}

\begin{document}

\maketitle
\thispagestyle{empty}
\pagestyle{empty}

\begin{abstract}
We investigate a decentralized reinforcement learning problem involving multiple agents that interact with the same Markov Decision Process (MDP). The agents can exchange information over a network to collectively learn the optimal state-action value function. For this setting, we introduce a novel epoch-based  distributed $Q$-learning algorithm called \textcolor{winered}{\texttt{VRDQ}}, where within each epoch, agents locally estimate the Bellman optimality operator and diffuse information using a consensus-based protocol. For both static and time-varying networks, we establish high-probability finite-time convergence rates for \textcolor{winered}{\texttt{VRDQ}} that enjoy linear speedups from collaboration. Crucially, we prove that such speedups in sample-complexity require only $\tilde{O}(1)$ communication, substantially improving upon the communication costs in prior work. 
\end{abstract}

\section{Introduction}
Given the promise of cooperative multi-agent reinforcement learning (MARL) in improving the sample efficiency of online decision-making, we consider a decentralized RL setting involving $N$ agents that can exchange information over a (potentially time-varying) network. Each agent interacts with a \emph{common environment}  modeled as a Markov Decision Process (MDP), with the goal of learning an optimal policy that maximizes a long-term cumulative return. In the single-agent setting, such an optimal policy can be learned using the celebrated $Q$-learning algorithm~\cite{watkins1992q}. Given the collective information available across the network in our setting, we focus on answering two basic questions: {(i)}~(\textbf{Sample Efficiency}) By exchanging information, can each agent learn the optimal policy using fewer samples (relative to the single-agent case)? {(ii)}~(\textbf{Communication Efficiency}) If so, what is the \emph{communication overhead} required to realize such collaborative speedups? Perhaps surprisingly, as we discuss below, these questions remain unresolved even for simple tabular RL problems, thereby motivating our current study. 

\textbf{Related Work.} Asymptotic convergence guarantees for a decentralized variant of the classical $Q$-learning algorithm were first provided in~\cite{soumyadecentralized}; similar results for actor-critic algorithms were later derived in~\cite{zhang2018fully, zavlanos}. However, to characterize explicit statistical gains from collaboration, one requires a finer non-asymptotic analysis absent in these papers. In a series of follow-up papers, finite-time rates were derived for decentralized temporal difference learning in~\cite{doan2019finite}, $Q$-learning in~\cite{heredia2020finite, lim2025finite}, and general stochastic approximation in~\cite{zeng2022decentralized}. However, there are two key limitations to all the aforementioned works. First, the final performance bounds do not explicitly demonstrate any benefit of collaboration between agents. Moreover, each method incurs a communication cost that scales linearly with the time horizon, i.e., the total number of samples. This creates a natural tension: \emph{with existing decentralized RL methods, achieving a high accuracy by gathering more samples comes at the expense of commensurately high communication costs, hindering their practical deployment in resource-constrained environments.}  

In this work, we resolve the above tension by developing a novel distributed Q-learning algorithm that (i) enjoys \emph{near-optimal} statistical benefits of collaboration, and (ii) incurs a communication cost that scales only \emph{poly-logarithmically} in the total number of samples, marking a significant improvement over existing methods that require linear-in-time communication costs. Our results apply to both static and time-varying networks, and, as such, are significantly more general in scope relative to some recent papers that assume a central coordinator~\cite{khodadadian, woo2023blessing, wang2023TMLR}. Importantly, our proposed algorithm has a structure that is fundamentally different from all the papers we have reviewed thus far.  

\noindent \(\bullet\) \textbf{Algorithmic Contribution.} We introduce a new decentralized RL algorithm called Variance-Reduced Diffused $Q$-learning (\texttt{VRDQ}) that combines two crucial ingredients: \emph{local operator estimation} and \emph{diffusion}. In contrast to standard approaches that update the \(Q\)-function (or other relevant parameters) at \emph{every} time-step using noisy update directions that suffer from high variance, our approach relies on making fewer \emph{infrequent} updates using low-variance directions obtained by locally estimating the Bellman optimality operator. The low frequency of updates, in turn, directly translates into the low communication overhead of our algorithm. The second key ingredient of our approach is to show how the local operator estimation phase can be run in parallel with an average-consensus-based diffusion phase meant to exchange information. The decoupled nature of these phases allows us to easily disentangle statistical errors from network-induced errors, leading to a simple overall analysis. 

\noindent \(\bullet\) \textbf{Theoretical Contribution.} Our first main result, namely Theorem~\ref{thm:main_result}, pertains to static networks, and establishes a high-probability finite-sample convergence rate of \(\tilde{\mc{O}} (1/\sqrt{NT})\) for our proposed algorithm \texttt{VRDQ}, where $T$ is the number of samples per agent, and $N$ is the number of agents. This result reveals a clear benefit of collaboration over the single-agent rate of \(\tilde{\mc{O}} (1/\sqrt{T})\) recently established in~\cite{Waiwright, Qu, li2024q}. In Theorem~\ref{thm:timevar}, we prove that \texttt{VRDQ} continues to enjoy the same collaborative gains for a fairly general class of time-varying networks. Crucially, for both static and time-varying networks, we show that such collaborative gains can be achieved with just $\mc{O}(\log^2(NT))$ communication. 

\section{Notation and Problem Formulation}\label{sec:notation}
\textbf{Graph model.}  We start by introducing our network model. Let $\cV = \{1, 2, \ldots, N\}$ be a set of $N$ agents interacting with the same environment (modeled as an MDP). The network is an undirected graph $\cG = (\cV, \cE)$, where $\cV$ is the set of nodes (agents), and $\cE \subseteq \cV \times \cV$ is the set of edges representing communication links between agents. Since the graph is undirected, we have $(i, j) \in \cE$ if and only if $(j, i) \in \cE$. We say that agent $j$ is a \textit{neighbor} of agent $i$ if $(i, j) \in \cE$, and define the \textit{neighbor set} of agent $i$ as the set of all its neighbors (including itself): $\cN_i=\{j\mid(i,j)\in\cE\}$. We associate a \textit{mixing matrix} $W \in \mathbb{R}^{N \times N}$ with the network, where the entry $(W)_{ij}$ denotes the weight that agent $i$ assigns to agent $j$'s information. If $i \neq j$, and $(i,j) \notin \mathcal{E}$, then $(W)_{ij}=0.$ The mixing matrix $W$ plays a central role in distributed algorithms, as it governs how agents combine information from their neighbors. As is standard, we assume that $W$ is symmetric and \textit{doubly stochastic}, i.e., $W = W^\top$, all entries are non-negative, and each row and column sums to one. Later, in Section~\ref{sec:TimeVar}, we will see how our results can be easily generalized to time-varying networks.

\textbf{MDP Model.} We now introduce the basic MDP notation used in this paper. We assume that the agents in $\cV$ interact with the same environment, which can be modeled as an MDP $\cM=\{\cS,\cA,\cR,\cP,\gamma\}$, where $\cS$ and $\cA$ are the finite state and action spaces whose cardinalities are denoted as $S$ and $A$, and $\cR:\cS\times\cA\to \mathbb{R}$ is the reward function, with $\cR(s,a)$ denoting the immediate deterministic reward received by playing action $a$ at state $s$. Throughout this paper, we assume bounded rewards with $|\cR(s,a)|\leq\bar R$, where $\bar R > 0$ is some constant. The object $\cP:\cS\times\cA\times\cS\to[0, 1]$ is the transition kernel, with $\cP(s'\mid s, a)$ denoting the probability of transitioning to the next state $s'$ by playing action $a$ at state $s$. Finally, $\gamma\in(0,1)$ is the discount factor.

The behavior of an agent is captured by a stochastic \textit{policy} $\pi:\cS\to\Delta(\cA)$, which outputs a probability distribution over the action space $\cA$ at a given state $s\in\cS$. It is then natural to develop a metric to measure the \textit{goodness} of a policy $\pi$ when the agent interacts with the MDP $\cM$ following this policy. This leads to the concept of the \textit{value function} $V^\pi\in\mathbb{R}^S$, defined as the expected infinite-horizon cumulative discounted reward starting from state $s\in\cS$:
\begin{equation}
    V^\pi(s)=\E\bracket{\sum_{t=0}^\infty\gamma^t\cR(s_t, a_t)\ \bigg |\ s_0=s, \pi},\label{eqn:value_function}
\end{equation}
where the expectation is taken over the randomness w.r.t. state transitions and the stochastic policy $\pi$; here, $s_t$ and $a_t$ represent the state and action, respectively, at time $t$.  Similarly, we introduce the concept of the \textit{state-action value function}, or the \textit{\(Q\)-function} $Q^\pi\in\mathbb{R}^{S\times A}$, which evaluates the policy starting from state $s$ under initial action $a$:
\begin{equation}
    Q^\pi(s,a)=\cR(s,a) + \E\bracket{\sum_{t=1}^\infty\gamma^t\cR(s_t, a_t)\ \bigg |\ \pi}.\label{eqn:Q_function}
\end{equation}

\textbf{\(Q\)-Learning.} The general objective of an agent is to find the optimal policy $\pi^*$ that maximizes the value function $V^\pi$ for all states $s\in\cS$. Unlike in dynamic programming problems, in our RL setup, the underlying MDP model comprising the transition kernels and the reward functions is \textit{unknown} to the agent. In this setting, the celebrated \(Q\)-learning algorithm~\cite{watkins1992q} learns $\pi^*$ by first iteratively estimating the optimal \(Q\)-function $Q^*\in\mathbb{R}^{S \times A}$, and then extracting an associated optimal policy $\pi^*$ via greedy action selection: $\pi^*(s)=\argmax_{a\in\cA} Q^*(s, a)$ at each state $s\in\cS$. The core idea behind \(Q\)-learning is to exploit the fact that $Q^*$ is the unique fixed-point of the \textit{Bellman optimality operator} $\cT^*:\mathbb{R}^{S\times A}\to \mathbb{R}^{S\times A}$ defined below~\cite{suttonRL}:
\begin{equation}\label{eqn:bellman_optimality}
    \cT^*f(s,a):=\cR(s,a) + \gamma\sum_{s'\in\cS}\cP(s'\mid s,a)\max_{a'\in\cA}f(s',a'),
\end{equation}
$\forall f\in\mathbb{R}^{S\times A}$. To run $Q$-learning, one maintains a sequence of noisy empirical estimates of $\cT^*$ using data generated by a suitable sampling model. In this work, we will consider the popular \textit{generative synchronous sampling model}~\cite{Waiwright, li2024q, kearns, sidford} where an agent makes observations of the following form. At each time step $t=0,1,\ldots$, for \textbf{every} state-action pair $(s,a)\in \mc{S}\times \mc{A}$, the agent independently samples a next state $s_t(s,a) \sim\cP(\cdot\mid s,a)$, and observes an immediate deterministic reward $\cR(s,a)$. The agent then constructs a \textit{noisy empirical} estimate $\cT_t:\mathbb{R}^{ S\times A}\to \mathbb{R}^{ S\times A}$ of the Bellman optimality operator $\cT^*$, defined as follows:
\begin{equation}
    \cT_t f(s,a) =\cR(s,a) + \gamma\max_{a'\in\cA}f(s_t(s,a),a'),\label{eqn:noisy_T}
\end{equation}
$\forall f\in\mathbb{R}^{S\times A}$. Using $\cT_t$, the \(Q\)-learning algorithm updates the Q-estimate as follows $\forall (s,a)\in\cS\times\cA$:
\begin{equation}
    Q_{t+1}(s,a) = (1-\alpha_t)Q_t(s,a) + \alpha_t \cT_t Q_t(s,a),\label{eqn:update_Q}
\end{equation}
where $Q_t\in\mathbb{R}^{S\times A}$ is the estimated \(Q\)-function at time-step $t$, and $\{\alpha_t\}$ is a suitable step-size sequence. Classical asymptotic results show that the sequence of iterates $\{Q_t\}$ generated by~\eqref{eqn:update_Q} converges to $Q^*$ almost surely~\cite{tsitsiklis94}. More recent work~\cite{Waiwright, Qu, li2024q} has established non-asymptotic guarantees, revealing that with $T$ samples, the estimation error $\Vert Q_T - Q^* \Vert_{\infty}$ is $\bigot{1/\sqrt{T}}$, with high probability. 

\textbf{Networked \(Q\)-Learning Problem.} We can now state our problem of interest. Suppose that each agent in $\cV$ is allowed to acquire, in parallel,  $T$ \emph{statistically independent} samples per state-action pair, through $T$ queries to a generative model. Acting independently, each agent can trivially achieve a convergence rate of $\bigot{1/\sqrt{T}}$, following standard results from single-agent \(Q\)-learning. Given that there are $N$ agents in total, we ask: \emph{Can agents collaborate to achieve an accelerated rate of $\bigot{1/\sqrt{NT}}$?} This question is non-trivial due to the decentralized nature of the setting: agents communicate over a graph that is \textbf{not assumed to be fully connected}, i.e., $\mathcal{N}_i \neq \cV$ in general. Therefore, to achieve a linear speedup w.r.t. $N$, agents must \emph{diffuse} their information throughout the network to collectively approximate $Q^*$ at an improved $\bigot{1/\sqrt{NT}}$ rate. We further require such a bound to hold with probability at least $1-\delta$, where $\delta \in (0,1)$ is a prescribed failure probability. 

In this context, we ask two basic questions: \textbf{(i)} \emph{What} information should each agent diffuse with others? \textbf{(ii)} How \emph{frequently} should such information be diffused? We provide concrete answers to these questions in the following section, where we present our algorithmic framework in detail. Before doing so, the following remark is in order. 

\begin{remark} While one can certainly consider more involved RL settings accounting for function approximation and/or asynchronous, Markovian sampling, we restrict our attention to a tabular setting with synchronous sampling for the following reasons. First, the generative synchronous sampling model we consider here has been extensively used in the theoretical analysis of RL algorithms~\cite{Waiwright, li2024q, kearns, sidford}; in a similar vein, tabular Q-learning has been used for building key insights in both single~\cite{Waiwright, Qu, li2024q} and multi-agent RL~\cite{woo2023blessing}. Second, the tabular setting we study here enables us to better convey the new algorithmic aspects of our approach. Third, and most importantly, a fundamental understanding of the amount of communication needed to achieve statistical collaborative gains under network constraints is not well understood for the setting in our paper. 
\end{remark}

\section{Variance-Reduced Diffused Q-Learning}\label{sec:algo}

\begin{algorithm}[H]
\caption{\small{Variance-Reduced Diffused \(Q\)-Learning} (\texttt{VRDQ})}
\label{algo:Algo 2}
\begin{algorithmic}[1]
\Require Total samples per agent $T$, epoch length \(H\), diffusion period \(L\), failure probability $\delta$. 
\State Initialize $Q_{i,0}(s,a) \leftarrow 0, \forall (s,a) \in \mc{S} \times \mc{A}$ and $i\in[N]$.
\For{epoch $k = 0$ to $K-1$}
\For {$i \in [N]$}
    \State \textcolor{winered}{\texttt{Estimate}} $\cT_{i,k}$ as in~\eqref{eqn:agent_wise_bellman_update}. 
    \State \textcolor{winered}{\texttt{Diffuse}} \( d_{i,k}^{(0)}(s,a) \) by running consensus for $L$ steps as per~\eqref{eqn:diffusion_step}.
    \State \texttt{Update} local estimate \(Q_{i,k}\) as per~\eqref{eqn:agent_update}.
\EndFor
\EndFor
\end{algorithmic}
\end{algorithm}

In this section, we introduce our proposed algorithm~\texttt{VRDQ} (outlined as Algorithm~\ref{algo:Algo 2}), which, given a prescribed failure probability $\delta \in (0,1)$, aims to generate local Q-function estimates at every agent that enjoy (due to collaboration) an error rate of $\tilde{O}(1/\sqrt{NT})$ with probability at least $1-\delta.$ Here, $T$ is the number of samples at each agent; since these samples are acquired in parallel, one can also think of $T$ as the total run-time duration of the algorithm. The key guiding observations behind our algorithm are as follows. We note that agents need to exchange information essentially only when their Q-tables are locally updated. So, it is natural to then ask how infrequently the Q-tables can be updated, while preserving the optimal $\tilde{O}(1/\sqrt{NT})$ rate. It turns out that just $\tilde{O}(1)$ updates suffice, provided each update is made using a ``low-variance" estimate of the Bellman optimality operator. Accordingly, our algorithm runs in epochs, where within an epoch, an agent \emph{simultaneously} performs (i) \emph{local estimation} of the Bellman optimality operator, and (ii) \emph{diffusion} of an update direction generated in the previous epoch. Importantly, an update to the Q-table is made \emph{only once} at the end of each epoch. As we shall see, this approach incurs only a logarithmic communication overhead and admits a straightforward analysis. We now describe the key ideas. 

\begin{figure}[t]
\begin{center}
\includegraphics[scale=0.425]{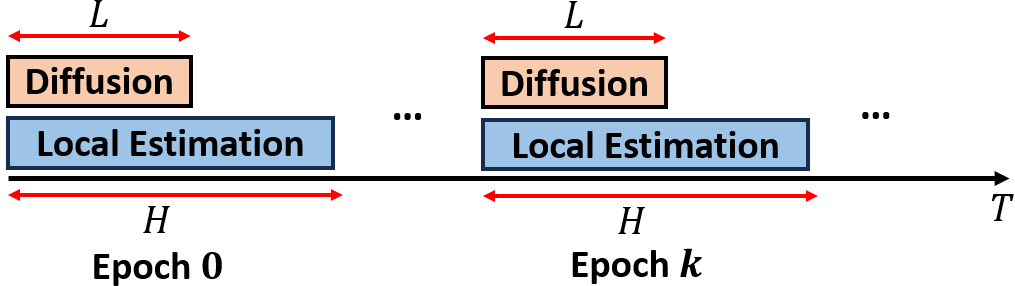}
\vspace{-4mm}
\end{center}
\caption{Illustration of \texttt{VRDQ} which runs in epochs of length $H$. Throughout the duration of each epoch, every agent locally estimates the Bellman optimality operator using samples acquired from the generative sampling model. In parallel, agents run an average consensus protocol for the first $L$ steps of the epoch to diffuse information.}
\label{fig:Algo}
\vspace{-5mm}
\end{figure}

\textbf{\textcolor{winered}{Module 1}} \textbf{(Local Operator Estimation).} In conventional single-agent and distributed RL algorithms, agents typically construct a noisy estimate $\mathcal{T}_t$ (as defined in~\eqref{eqn:noisy_T}) of the Bellman operator  immediately upon observing a new sample, and subsequently update their Q-tables using the update rule in~\eqref{eqn:update_Q}. Inspecting the noisy operator $\cT_t$ in~\eqref{eqn:noisy_T}, we notice that it is constructed using only a single sampled next state, as opposed to the true Bellman optimality operator $\cT^*$ in~\eqref{eqn:bellman_optimality}, which takes an expectation over all possible next states.  
Thus, in standard approaches, agents update their Q-values \textbf{all the time} using \emph{high-variance} noisy operator estimates. Given this observation, we ask: \emph{Does it suffice for each agent to update its Q-table intermittently with a refined (low-variance) operator to achieve the same performance?}

Our main innovation is to show that this is indeed possible via a variance reduction technique. We divide the $T$ samples of each agent into $K$ epochs of $H$ samples each, such that $T=KH$ (assuming $T$ is divisible by $K$ for simplicity); here, $K$ is a design parameter that will be specified later. At epoch $k\in\{0,\cdots, K-1\}$, agent $i\in[N]$ generates an estimate $\cT_{i,k}$ of the Bellman optimality operator $\cT^*$, by estimating the transition kernel $\cP$ and the reward function $\cR$. To do so, the agent sequentially queries the generative sampling model, which provides $H$ independent observations for every state-action pair within each epoch. Since we assume the rewards to be non-noisy, the reward function can be directly estimated as $\cR_{i,k}(s,a)=\cR(s,a)$ for all $(s,a)$ pairs, where $\cR_{i,k}$ is the estimate of the reward function of agent $i$ at epoch $k$. The kernel $\cP$ is estimated as follows: 
\begin{equation}\label{eqn:empirical_prob_dist}
    \cP_{i,k}(s'\mid s,a)=(1/H)\sum_{u=0}^{H-1}\mathbf{1}_{i,k,u}^{s,a}(s'),
\end{equation}
where $\cP_{i,k}$ is the transition kernel estimate of agent $i$ at epoch $k$, and $\mathbf{1}_{i,k,u}^{s,a}(s')$ is an indicator function that equals 1 if, at the $u$-th query to state-action pair $(s,a)$ in epoch $k$, agent $i$ observes $s'$ as the next state. Using these estimates, agent $i$ constructs a refined Bellman optimality operator estimate $\cT_{i,k}:\mathbb{R}^{S\times A}\to\mathbb{R}^{S\times A}$, defined as
\begin{equation}\label{eqn:agent_wise_bellman_update}
    \medmath{\cT_{i,k} f(s,a) := \cR_{i,k}(s,a) + \gamma \sum_{s' \in \cS} \cP_{i,k}(s' \mid s,a) \max_{a' \in \cA} f(s',a')},
\end{equation}
$\forall f \in \mathbb{R}^{S \times A}.$ Intuitively, a larger epoch length $H$ indicates a better estimate for $\cT^*$. The choice of this hyperparameter will be detailed in the next section.

\textbf{\textcolor{winered}{Module 2}} \textbf{(Diffusion).} Having generated $\cT_{i,k}$, it is then necessary to diffuse this information across the network to leverage collaborative gains. Our key observation in this regard is to note that \emph{the time it takes to obtain an accurate estimate of the Bellman operator dominates the time taken to spread information}. As such, our \textbf{core idea} is to have the estimation and diffusion phases evolve \emph{in parallel}, in a decoupled manner; see Fig.~\ref{fig:Algo} for an illustration. Formally, in each epoch $k =0 , 1, \ldots,$ every agent $i$ aims to diffuse the object $\mathcal{T}_{i,k-1}Q_{i,k}$, where $Q_{i,k} \in \mathbb{R}^{S \times A}$ is the Q-estimate of agent $i$ at the beginning of epoch $k$, with $Q_{i,0}=0.$ We use the convention that $\cT_{i,-1}f =0, \forall f \in \mathbb{R}^{S \times A}$. To diffuse information, agents run a consensus protocol for only the first $L$ steps of the $H$-length epoch, where $L$ is a parameter to be specified later. The diffusion steps evolve as
\begin{equation}\label{eqn:diffusion_step}
    d_{i,k}^{(\ell+1)}(s,a) = \sum_{j \in \mathcal{N}_i} (W)_{ij} d_{j,k}^{(\ell)}(s,a), \quad\forall(s,a)\in\cS\times\cA,
\end{equation}
for $\ell = 0, \ldots, L-1$, where the weights $\{(W)_{ij}\}$ are entries of the mixing matrix $W$, and $d_{i,k}^{(\ell)} \in \mathbb{R}^{S \times A}$ denotes the diffusion model of agent $i$ at diffusion step $\ell$ within epoch $k$. This object is initialized as
\begin{equation}\label{eqn:initial_diffusion_step}
    d_{i,k}^{(0)}= \mathcal{T}_{i,k-1}Q_{i,k}.
\end{equation} 
By convention, note that \(d_{i,0}^{(0)}= 0\). Define \(d_{k}^{(\ell)}(s,a) := \big(d_{1,k}^{(\ell)}(s,a), \ldots, d_{N,k}^{(\ell)}(s,a)\big)^\top\), for all \(\ell \in \{0,\cdots, L-1\}\). Based on~\eqref{eqn:diffusion_step}, after \(L\) rounds of diffusion in the \(k\)-th epoch, the following holds for each \((s,a) \in \mc{S} \times \mc{A}\):
\begin{equation}\label{eqn:compact_diffusion_step}
    d_{k}^{(L)}(s,a) = W^L d_{k}^{(0)}(s,a).
\end{equation}
Finally, each agent \(i \in [N]\) updates its $Q$-estimate as follows:
\begin{equation}\label{eqn:agent_update}
    Q_{i,k+1} = (1 - \alpha) Q_{i,k} + \alpha d_{i,k}^{(L)},
\end{equation}
where $\alpha \in (0,1)$ is the step-size.
Intuitively, the diffused model $d_{i,k}^{(L)}$ aggregates information from all agents in the network, thereby reducing the variance of each agent's Q estimate and accelerating convergence through collaborative learning. This intuition will be made precise in the next section, where we will formally analyze \texttt{VRDQ}. 
\section{Main Result}
To state our main result, we require the following key property of doubly-stochastic matrices.
\begin{fact}
\label{fact:geom_mixing_uniform}
Let $W\in\mathbb{R}^{N\times N}$ be doubly stochastic. Assume $W$ is \emph{primitive}, i.e., $W$ is irreducible and aperiodic
\textup{(}it suffices that the induced graph is connected and $(W)_{ii}>0$ for all $i\in[N]$\textup{)}. Then there exist constants
$\texttt{C}_1>0$ and $\rho\in(0,1)$ such that, for all $i\in[N]$ and all $t\ge 0$,
\begin{equation}\label{eqn:fact_1}
\Big\|[W^t]_i-\tfrac{1}{N}\mathbf{1}^\top\Big\|
\;\le\; \texttt{C}_1\,\rho^t,
\end{equation}
where $[W^t]_i$ denotes the $i$-th row of $W^t$ and $\mathbf{1}\in\mathbb{R}^N$ is the all-ones vector.
\end{fact}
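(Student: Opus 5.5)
The plan is to reduce the claim to a spectral bound on $W$, using the symmetry assumed in Section~\ref{sec:notation}. Since $W$ is symmetric and doubly stochastic, $\mathbf{1}$ is a right and left eigenvector with eigenvalue $1$. Write $J=\tfrac{1}{N}\mathbf{1}\mathbf{1}^\top$. Because $WJ=JW=J$ and $J^2=J$, induction gives $W^t-J=(W-J)^t$ for all $t\ge 1$; for $t=0$ the difference is $I-J$, whose norm is at most $1$. Moreover $[W^t]_i-\tfrac{1}{N}\mathbf{1}^\top=e_i^\top(W^t-J)$. Therefore
\[
\Big\|[W^t]_i-\tfrac1N\mathbf 1^\top\Big\|\le \|W^t-J\|_2=\|(W-J)^t\|_2,
\]
with the vector norm taken to be Euclidean. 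If another norm is intended, norm equivalence on $\mathbb{R}^N$ only changes the constant.

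Next I bound the spectrum. $W$ is a real symmetric matrix, so it has an orthonormal eigenbasis that contains $\mathbf{1}/\sqrt{N}$. The matrix $W-J$ shares the remaining eigenvectors with $W$ and sends $\mathbf{1}$ to $0$. Hence $\|(W-J)^t\|_2=\lambda^t$, where $\lambda:=\max\{|\lambda_2(W)|,|\lambda_N(W)|\}$ is the largest modulus among the eigenvalues of $W$ other than the Perron eigenvalue.

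The main step is to show $\lambda<1$. I would use Perron--Frobenius: $W$ is nonnegative and stochastic, so its spectral radius is $1$. Since $W$ is primitive, some power $W^m$ has all entries positive. For a positive matrix, the eigenvalue $1$ is simple and every other eigenvalue has modulus strictly less than $1$. The eigenvalues of $W^m$ are the $m$-th powers of those of $W$, so $W$ has no eigenvalue $-1$ and no repeated eigenvalue $1$. I will also check the parenthetical sufficient condition: a connected graph with $(W)_{ii}>0$ gives irreducibility (connectivity) and aperiodicity (self-loops), which together give primitivity.

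This yields the claim with $\rho=\lambda$ and $\texttt{C}_1=1$ whenever $\lambda>0$. In the degenerate case $\lambda=0$ (for example $W=J$), any $\rho\in(0,1)$ works with $\texttt{C}_1=1$, since then $W^t=J$ for all $t\ge1$ and the $t=0$ term is at most $1$.

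Without symmetry, the same conclusion would follow from the Jordan form of $W-J$, with $\rho$ any number strictly between its spectral radius and $1$ and a larger $\texttt{C}_1$. That is unnecessary here, since the paper assumes $W=W^\top$.
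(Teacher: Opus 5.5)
Your proof is correct. There is no paper proof to compare it with: the paper records this statement as a standard ``Fact'' (implicitly, Perron--Frobenius theory for primitive stochastic matrices) and uses it in Lemma~\ref{lemma:deterministic_bound} without proof. Your spectral argument supplies that justification for the symmetric mixing matrices the paper assumes in Section~\ref{sec:notation}. It also gives explicit constants, $\texttt{C}_1=1$ and $\rho=\max\{|\lambda_2(W)|,|\lambda_N(W)|\}$, so the $\log(1/\rho)$ in the choice of $L$ becomes an explicit spectral-gap quantity.

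Two small remarks. First, the paper applies the fact with the $\ell_\infty$ norm of the row, pairing it via H\"older with $\|d^{(0)}_k(s,a)\|_1$. Since $\|\cdot\|_\infty\le\|\cdot\|_2$, your Euclidean bound transfers with the same constant. Pairing your $\ell_2$ bound with Cauchy--Schwarz instead would replace the factor $N$ in $\Delta_1$ by $\sqrt{N}$, which is what the paper itself obtains in its time-varying proof.

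Second, the statement does not itself assume $W=W^\top$, so your Jordan-form remark is the right fallback, but two points need to be made explicit there. You must use that double stochasticity makes $\mathbf{1}^\perp$ invariant under $W$, so the spectrum of $W-J$ is $\{0\}$ together with that of $W$ restricted to $\mathbf{1}^\perp$. The Perron step must also rule out every unimodular eigenvalue other than a simple $1$, not only $-1$; your power argument (via $W^m>0$, counting algebraic multiplicities) does this without change.
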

Next, define the agent-wise epoch-\(k\) error as \(e_{i,k} := \lVert Q_{i,k} - Q^* \rVert_{\infty}\). Our main result for \texttt{VRDQ} is then as follows.
\begin{theorem}\label{thm:main_result} Given any failure probability $\delta \in (0,1)$, suppose the step-size $\alpha$, number of epochs $K$, and the diffusion period \(L\) in Algorithm~\ref{algo:Algo 2} be chosen as follows:
\begin{equation}\medmath{
\begin{aligned}
&\alpha = \frac{\log(NT)}{(1 - \gamma)K}; \hspace{2mm} K = \ceil{ c_1\log(NT)/(1-\gamma)};\\
&L =  \ceil{{\log\!\Big(c_2\,N^{3/2}\sqrt{T}\sqrt{1-\gamma}\Big)}/{\log(1/\rho)}},
\end{aligned}}
\label{eqn:parameter_choice}
\end{equation}
where $c_1,c_2$ are universal constants, and $\rho\in(0,1)$ is as defined in~\eqref{eqn:fact_1}. Then, after $K$ epochs, the following bound holds with probability at least \( 1 - \delta \):
\begin{equation}\label{eqn:thm_main}
\begin{aligned}
e_{i,K} &\le \frac{{e}_{i,0}}{NT} +  \mathcal{O}\left( \frac{\bar R \sqrt{\log(NT) \log\left(\frac{2 SA T}{\delta}\right)}}{(1 - \gamma)^{2.5} \sqrt{NT}} \right).\\
\end{aligned}
\end{equation}
\end{theorem}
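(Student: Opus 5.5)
I will unroll the per-epoch recursion and split the one-step error into three parts: a contraction part, a statistical (sampling) part, and a consensus (network) part. Write $\bar d_k^{(0)} := \frac{1}{N}\sum_{j} \cT_{j,k-1}Q_{j,k}$. Subtracting $Q^* = \cT^* Q^*$ from the update~\eqref{eqn:agent_update} gives
\[
Q_{i,k+1}-Q^* = (1-\alpha)(Q_{i,k}-Q^*) + \alpha\big(d_{i,k}^{(L)} - \bar d_k^{(0)}\big) + \alpha\big(\bar d_k^{(0)} - \cT^*Q^*\big).
\]
The last term decomposes further:
\[
\bar d_k^{(0)} - \cT^*Q^* = \frac{1}{N}\sum_j \big(\cT_{j,k-1}Q_{j,k} - \cT_{j,k-1}Q^*\big) + \frac{1}{N}\sum_j\big(\cT_{j,k-1}-\cT^*\big)Q^*.
\]
Each $\cT_{j,k-1}$ is a $\gamma$-contraction in $\norm{\cdot}$, because $\cP_{j,k-1}(\cdot\mid s,a)$ is a probability vector. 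Hence the first piece is at most $\frac{\gamma}{N}\sum_j e_{j,k}$.

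\textbf{Statistical term.} For fixed $(s,a)$, the quantity $\frac{1}{N}\sum_j(\cT_{j,k-1}-\cT^*)Q^*(s,a)$ is an average of $NH$ i.i.d.\ bounded zero-mean terms. The summands are $\gamma(\max_{a'}Q^*(s',a') - \E[\cdot])$, bounded by $2\gamma\bar R/(1-\gamma)$. The key point is that $Q^*$ is deterministic, so there is no dependence issue. Hoeffding's inequality plus a union bound over $SA$ pairs and $K\le T$ epochs gives, with probability $1-\delta$,
\[
\norm{\cdot} \le \frac{c\bar R}{1-\gamma}\sqrt{\frac{\log(2SAT/\delta)}{NH}} =: \varepsilon_{\rm stat}.
\]
Since $H = T/K \asymp T(1-\gamma)/\log(NT)$, this becomes $\bar R\sqrt{\log(NT)\log(2SAT/\delta)}/((1-\gamma)^{1.5}\sqrt{NT})$. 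The remaining factor $1/(1-\gamma)$ will come from unrolling. This is also why I anchor at $Q^*$ rather than $Q_{j,k}$: $Q_{j,k}$ depends on past epochs but not on the fresh samples of epoch $k-1$'s operator. Strictly, $\cT_{j,k-1}$ uses epoch $k-1$ samples and $Q_{j,k}$ also depends on them through $d^{(0)}_{k-1}$. Anchoring at $Q^*$ avoids this entirely, and the contraction step handles the rest.

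\textbf{Network term.} By~\eqref{eqn:compact_diffusion_step} and Fact~\ref{fact:geom_mixing_uniform},
\[
|d_{i,k}^{(L)}(s,a) - \bar d_k^{(0)}(s,a)| \le \texttt{C}_1\rho^L \|d_k^{(0)}(s,a)\| \le \texttt{C}_1\rho^L\sqrt N \max_j \norm{\cT_{j,k-1}Q_{j,k}}.
\]
I need a uniform bound $\norm{Q_{j,k}}\le \bar R/(1-\gamma)$ (up to constants). I will prove it by induction: $Q_{j,k+1}$ is a convex combination of $Q_{j,k}$ and averages of $\cT_{\cdot}Q_{\cdot}$, and $\norm{\cT f}\le \bar R+\gamma\norm{f}$. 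So $\norm{d^{(0)}}\le \bar R/(1-\gamma)$, and $W^L$, being doubly stochastic, preserves the sup bound. The choice of $L$ in~\eqref{eqn:parameter_choice} makes $\rho^L \le 1/(c_2 N^{3/2}\sqrt{T(1-\gamma)})$. The network error is then $\varepsilon_{\rm net}\lesssim \bar R/((1-\gamma)^{1.5}N\sqrt{T})$, which is dominated by $\varepsilon_{\rm stat}$.

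\textbf{Unrolling.} Let $E_k := \max_i e_{i,k}$. Then
\[
E_{k+1}\le (1-\alpha(1-\gamma))E_k + \alpha(\varepsilon_{\rm stat}+\varepsilon_{\rm net}).
\]
Iterating $K$ times gives
\[
E_K \le (1-\alpha(1-\gamma))^K E_0 + \frac{\varepsilon_{\rm stat}+\varepsilon_{\rm net}}{1-\gamma}.
\]
With $\alpha(1-\gamma)K = \log(NT)$, the first factor is at most $e^{-\log(NT)} = 1/(NT)$. We need $\alpha<1$, which the choice of $c_1$ ensures. Substituting $H$ yields~\eqref{eqn:thm_main}, with $e_{i,0}=\norm{Q^*}$ identical across $i$.

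\textbf{Main obstacle.} The delicate step is the dependence structure in the statistical term. I expect the cleanest route is the $Q^*$-anchored decomposition above, so that concentration is only needed for fixed vectors. A secondary point is the first epoch ($d_{i,0}^{(0)}=0$, i.e.\ $\cT_{\cdot,-1}=0$), which does not fit the recursion. I will handle it separately: $Q_{i,1}=(1-\alpha)Q_{i,0}=0$, so $E_1 = E_0$, costing only one epoch's contraction factor and absorbed into constants.
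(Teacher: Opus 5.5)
Your proof is correct, but it is organized differently from the paper's in two ways.

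First, the paper passes to the network average $\bar Q_k$, runs the recursion on $\bar e_k$, and accounts for disagreement among agents separately. It does so inside the estimation error (the term $\mathcal{A}_2$, controlled via the unrolled consensus gap $\epsilon_{i,k}=Q_{i,k}-\bar Q_k$), and again at the end through $e_{i,K}\le \bar e_K+\|\epsilon_{i,K}\|_\infty$. You instead run the recursion directly on $E_k=\max_i e_{i,k}$. The network error then enters once per epoch, through $d_{i,k}^{(L)}-\bar d_k^{(0)}$, and no consensus-gap recursion is needed.

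Second, the paper concentrates $\frac{1}{N}\sum_i(\cT_{i,k-1}-\cT^*)Q_{i,k}$. This requires conditioning on $\{Q_{i,k}\}$ and using that these iterates are independent of the epoch-$(k-1)$ samples. You anchor at the deterministic $Q^*$ and absorb $\cT_{j,k-1}Q_{j,k}-\cT_{j,k-1}Q^*$ using the fact that each empirical operator is itself a $\gamma$-contraction in $\|\cdot\|_\infty$. Your route needs only Hoeffding for a fixed vector, and it would stay valid even if the iterates were correlated with the operator samples. The paper's route uses only the contraction of the true operator $\cT^*$, but it must exploit the one-epoch lag built into the algorithm. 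Both give the same rate.

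Three minor remarks:

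\textbf{Your reason for anchoring at $Q^*$ rests on a misreading.} Since $d^{(0)}_{j,k-1}=\cT_{j,k-2}Q_{j,k-1}$ uses epoch-$(k-2)$ samples, $Q_{j,k}$ depends only on epochs $\le k-2$. It is therefore independent of the samples defining $\cT_{j,k-1}$, which is exactly what the paper uses; your argument simply does not need this.

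\textbf{Your first-epoch handling is more careful than the paper's.} The contraction factor satisfies $(1-\alpha(1-\gamma))^{K-1}\le (NT)^{-1+1/K}\le e^{1/c_1}/(NT)$, not exactly $1/(NT)$. The excess $(e^{1/c_1}-1)e_{i,0}/(NT)\le (e^{1/c_1}-1)\bar R/((1-\gamma)NT)$ is dominated by the second term, so the stated bound still holds.

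\textbf{The network-term factor depends on the norm in the mixing bound.} If Fact~\ref{fact:geom_mixing_uniform} is read in $\ell_\infty$, as the paper does, your $\sqrt N$ becomes $N$ via H\"older. The $N^{3/2}$ in $L$ still covers this.
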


The proof of Theorem~\ref{thm:main_result} is deferred to the next section. We now highlight a few takeaways from the above result.

$\bullet$ \textbf{Near-Optimal Rates with Collaborative Speedup.} To interpret the guarantee in~\eqref{eqn:thm_main}, fix any agent \(i \in [N]\). 
For a single learner, prior work~\cite{Qu,Waiwright} has shown that \(Q\)-learning converges at the rate \(\tilde{\mathcal{O}}\!\left(\tfrac{1}{(1-\gamma)^{2.5}\sqrt{T}}\right)\) when trained on \(T\) samples. With \(N\) agents jointly interacting with the same environment \(\mc{M}\), the \emph{best possible rate} is \(\tilde{\mathcal{O}}\!\left(\tfrac{1}{\sqrt{NT}}\right)\), up to polynomial factors in $(1-\gamma).$ This matches the rate in Theorem~\ref{thm:main_result}, revealing that \texttt{VRDQ} enjoys \emph{near-optimal guarantees} that benefit from collaboration. \\
$\bullet$ \textbf{Poly-Log  Communication Overhead.} The total communication overhead (per agent) of \texttt{VRDQ} is $KL$, where $K$ is the number of epochs, and $L$ is the number of diffusion steps per epoch. From~\eqref{eqn:parameter_choice}, we note that $K$ and $L$ are logarithmic in both the number of samples $T$, and the number of agents $N$. This marks a significant reduction in communication cost relative to prior work on distributed/federated RL, which suffer from 
\(\mc{O}(T)\) dependence on the number of samples and/or \(\mc{O}(N)\) dependence on the number of agents~\cite{woo2023blessing,khodadadian,wang2023TMLR}. In summary, \texttt{VRDQ} \textit{achieves near-optimal statistical efficiency while drastically reducing communication costs}, a crucial property for large-scale multi-agent RL systems.\\
$\bullet$~\textbf{Effect of Network Topology.} From~\eqref{eqn:thm_main}, we note that the network topology does not directly influence the final convergence rate. However, it does influence the initial burn-in time needed for our guarantees to kick in. To see why, observe from~\eqref{eqn:parameter_choice} that the network structure, as captured by the parameter $\rho$, sets a lower bound on the diffusion period $L$. Moreover, we require $T \geq KL$, since we need $L \leq H = T/K$, where $H$ is the epoch length. Thus, the network-dependent quantity $\rho$ imposes a lower bound on the total run-time duration $T.$


\section{Simulation Results for \texttt{VRDQ}}
In this section, we evaluate the performance of \texttt{VRDQ} on a synthetic grid-world environment with $10$ states, $5$ actions, and discount factor $\gamma = 0.9$. All rewards belong to the interval $[0,1]$. The step size is set to $\alpha = 0.1$, time-steps to $T = 10^5$, and the failure probability to $\delta = 0.01$. 
\begin{figure}[t]
\begin{center}
\begin{tabular}{cc}
   \hspace{-5 mm}\includegraphics[width = 45 mm, height=30mm]{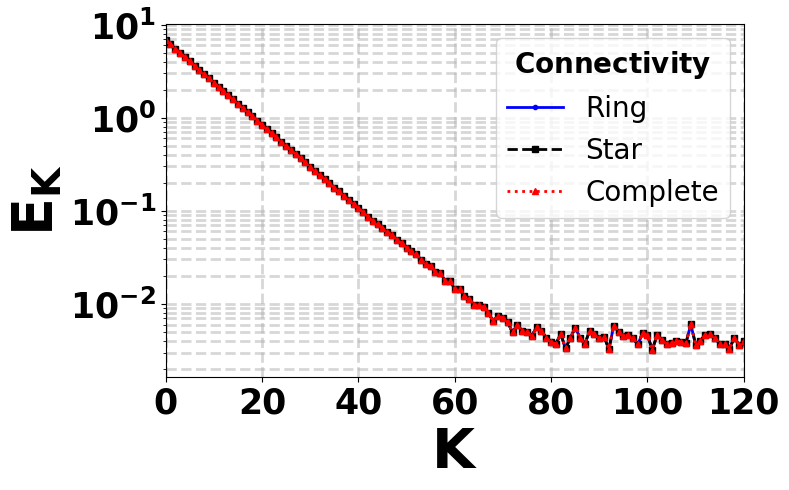}&\hspace{-4 mm}\includegraphics[width = 45 mm, height=30mm]{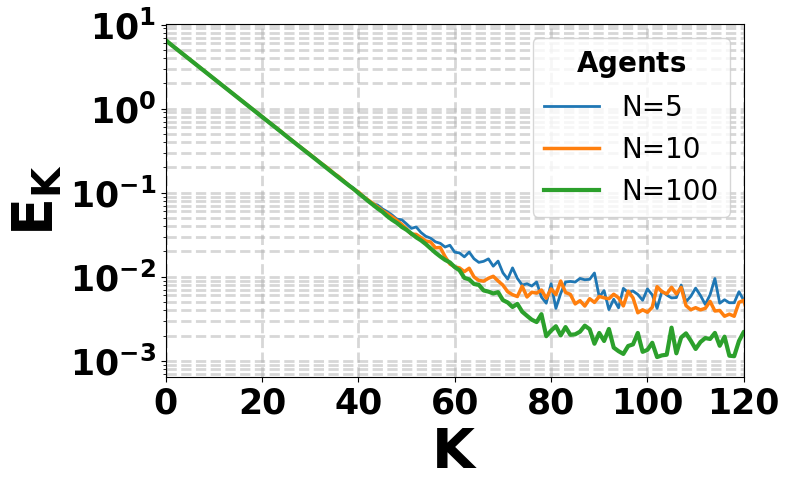}
    \end{tabular}
\end{center}
\caption{Plots of the $\ell_\infty$ error $E_K = \sum_{i\in [N]}\lVert Q_{i,K} - Q^* \rVert_{\infty}/N$ for \texttt{VRDQ} as a function of the number of epochs $K$, with varying number of agents \textbf{(Left)}, and under different network topologies with $N=100$ \textbf{(Right)}.}
\label{fig:sim2}
\vspace{-3mm}
\end{figure}
The left panel of Fig.~\ref{fig:sim2} illustrates the benefits of collaboration from \texttt{VRDQ}, showing a lower error floor with increasing $N$. In the right panel, we note that when the epoch length is chosen to be large enough, the network topology does not affect the convergence rate of \texttt{VRDQ}, complying with theory.   

\section{Q-Learning over Time-Varying Networks}
\label{sec:TimeVar}
In this section, we will briefly explain how our developments can be easily extended to account for a fairly general class of undirected time-varying networks. To do so, consider an undirected graph sequence $\{\mathcal{G}(0), \mathcal{G}(1), \ldots\}$, where at a given time-step $t$, $\mathcal{G}(t)=(\mathcal{V}, \mathcal{E}(t))$, with $\mathcal{E}(t)$ representing the (time-varying) edge set at time $t$. We say that $(i,j) \in \mathcal{E}(t)$ if and only if agents $i$ and $j$ can exchange information with each other at time $t$. Accordingly, the neighbor set of agent $i$ at time $t$ is defined as $\mathcal{N}_i(t) := \{j| (i,j) \in \mathcal{E}(t)\}.$ Let $W(t)$ be the mixing weight matrix at time $t$ that is consistent with $\mathcal{G}(t)$ in the sense that if $i \neq j$, and $(j,i) \notin \mathcal{E}(t)$, then $(W)_{ij}(t)=0$, where $(W)_{ij}(t)$ is the $ij$-th component of the matrix $W(t).$ To proceed, for any $b = 1, 2, \ldots,$ and $t \geq b-1$, let us define $W_b(t): = W(t) W(t-1) \cdots W(t-b+1).$ Following~\cite{nedic2017achieving}, we then impose the following standard assumptions on the sequence $\{W(t)\}$ of mixing matrices. 

\begin{assumption} 
\label{ass:Time_Var}
The sequence $\{W(t)\}$ satisfies the following: 
\begin{enumerate} 
\item[(i)] $W(t) \mathbf{1} = \mathbf{1},$ and $\mathbf{1}^{\top} W(t) = \mathbf{1}^{\top}, \forall t \geq 0.$
\item[(ii)] There exists a positive integer $B$ such that
$$\omega : = \sup_{t \geq B-1} \omega(t) < 1, \hspace{1mm} \textrm{where} \hspace{1mm} \omega(t):= \Vert W_B(t) - \frac{\mathbf{1} \mathbf{1}^{\top}}{N} \Vert_2, 
$$
\end{enumerate}
and $\omega(t)$ is defined as above for all $t \geq B-1.$
\end{assumption}

We note that the above assumptions have appeared extensively in the study of consensus algorithms; see~\cite{nedic2017achieving} and the references therein. To account for time-varying networks, the only minor modification to \texttt{VRDQ} is in the diffusion step~\eqref{eqn:diffusion_step}, which now takes the following form:
\begin{equation}\label{eqn:diffusion_step_TV}
    d_{i,k}^{(\ell+1)}(s,a) = \sum_{j \in \mathcal{N}_i(kH+\ell)} (W)_{ij}(kH+\ell) d_{j,k}^{(\ell)}(s,a), 
\end{equation}
where $\ell = 0, 1, \ldots, \bar{L}B-1,$ and $\bar{L}$ is a parameter to be specified shortly. Two points are worth noting. First, the $\ell$-th diffusion step within the $k$-th epoch corresponds to time-step $kH+\ell$; hence, $W(kH+\ell)$ governs the diffusion process at this step. Second, the agents run diffusion for only the first $\bar{L}B$ time steps within each epoch. We then have the following result for time-varying graphs. 

\begin{theorem} 
\label{thm:timevar}
Suppose Assumption~\ref{ass:Time_Var} holds, and \texttt{VRDQ} is run with the same choice of step-size $\alpha$ and epoch count $K$ as specified in~\eqref{eqn:parameter_choice}. Moreover, suppose that 
$$ \bar{L} = \ceil{{\log\!\Big(C\,N\sqrt{T}\sqrt{1-\gamma}\Big)}/{\log(1/\omega)}}, $$ 
where $\omega$ is as defined in Assumption~\ref{ass:Time_Var}, and $C$ is a suitable universal constant. Then, \texttt{VRDQ} provides the exact same guarantee as in~\eqref{eqn:thm_main}. 
\end{theorem}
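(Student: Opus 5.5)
The plan is to reuse the proof of Theorem~\ref{thm:main_result} essentially unchanged. The only place the network enters that argument is the per-epoch consensus error $\max_i \bigl| d_{i,k}^{(L)}(s,a) - \bar d_k^{(0)}(s,a)\bigr|$, where $\bar d_k^{(0)} := \tfrac1N\sum_j d_{j,k}^{(0)}$. The static analysis controls this error through Fact~\ref{fact:geom_mixing_uniform}. I will derive an analogous geometric bound from Assumption~\ref{ass:Time_Var} and then check that $\bar L$ pushes it below the same threshold.

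Concretely, for each $(s,a)$ the time-varying diffusion~\eqref{eqn:diffusion_step_TV} gives
$$d_k^{(\bar L B)}(s,a) = \Phi_k\, d_k^{(0)}(s,a), \qquad \Phi_k := W(kH+\bar L B-1)\cdots W(kH).$$
Split $\Phi_k$ into $\bar L$ consecutive blocks of length $B$, each of the form $W_B(t)$ with $t\ge B-1$. By Assumption~\ref{ass:Time_Var}(i) every $W(t)$ fixes $\mathbf 1$ and $\mathbf 1^\top$. Hence $J:=\mathbf 1\mathbf 1^\top/N$ commutes with each factor, and
$$\Phi_k - J=\prod_{b}\bigl(W_B(t_b)-J\bigr).$$
Assumption~\ref{ass:Time_Var}(ii) then yields $\|\Phi_k-J\|_2\le \omega^{\bar L}$. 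Since $J$ preserves the average,
$$\max_i\bigl|d_{i,k}^{(\bar LB)}(s,a)-\bar d_k^{(0)}(s,a)\bigr|\le \bigl\|(\Phi_k-J)\,d_k^{(0)}(s,a)\bigr\|_2\le \omega^{\bar L}\sqrt N\,\max_j \bigl|d_{j,k}^{(0)}(s,a)\bigr|.$$
This is the same structure as the static bound. In the static case the row-wise Fact gives $\texttt{C}_1\rho^L$ per row times $\|d\|_2\le\sqrt N\max_j|d_j|$, and the extra $\sqrt N$ coming from the union over rows/normalization explains the $N^{3/2}$ in $L$. In the time-varying case the operator-norm bound applies directly to all agents simultaneously, which is why $\bar L$ only needs $N$ inside the log.

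Next I plug this in. In the static proof one bounds $\|d_{j,k}^{(0)}\|_\infty=\|\cT_{j,k-1}Q_{j,k}\|_\infty\le \bar R/(1-\gamma)$, after first showing by induction that $\|Q_{j,k}\|_\infty\le \bar R/(1-\gamma)$; this holds because the update~\eqref{eqn:agent_update} is a convex combination of bounded quantities. The consensus error per epoch is then at most $\omega^{\bar L}\sqrt N\,\bar R/(1-\gamma)$. With the stated $\bar L$ this is $\mathcal O\bigl(\bar R/((1-\gamma)^{1.5}\sqrt{NT})\bigr)$ for suitable $C$. I then rerun the error recursion of Theorem~\ref{thm:main_result}. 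Writing $Q_{i,k+1}-Q^*=(1-\alpha)(Q_{i,k}-Q^*)+\alpha(\bar d_k^{(0)}-Q^*)+\alpha\,(\text{consensus error})$, the averaged term is split into a contraction part $\gamma\max_j e_{j,k}$ and a statistical part. The statistical part is the average of $N$ independent empirical operator deviations, each built from $H=T/K$ samples, and Hoeffding with a union bound over $(s,a)$ and epochs gives the $\sqrt{\log(2SAT/\delta)/(NH)}$ term. Unrolling over $K$ epochs gives exactly the static bound. The consensus contribution enters additively as $\sum_k\alpha(1-\alpha(1-\gamma))^{K-1-k}\cdot\text{(err)}\le \text{err}/(1-\gamma)$, which stays within~\eqref{eqn:thm_main}.

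I expect the main obstacle to be purely bookkeeping. The diffusion in epoch $k$ uses $W(kH+\ell)$ for $\ell<\bar LB$, so I must confirm that the blocks start at indices $t\ge B-1$ where the supremum in Assumption~\ref{ass:Time_Var}(ii) applies. The only problematic block is the first block of epoch $0$, and there $d_{i,0}^{(0)}=0$, so it is harmless. I must also ensure $\bar L B\le H$, which is again a burn-in requirement on $T$. Everything else is inherited verbatim from the proof of Theorem~\ref{thm:main_result}.
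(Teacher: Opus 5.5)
Your proposal is correct and follows the paper's proof. The paper proves the same factorization $\Phi_k-J=\prod_b\bigl(W_B(t_b)-J\bigr)$ by induction on the block index; this is the same algebra as your commutation remark. It then bounds $\bigl\|(\Phi_k-J)\,d_k^{(0)}(s,a)\bigr\|_\infty\le\omega^{\bar L}\sqrt N\,\bar R/(1-\gamma)$, which replaces Lemma~\ref{lemma:deterministic_bound}, and inherits the rest from Theorem~\ref{thm:main_result}. This is why $\bar L$ carries $N$ rather than $N^{3/2}$.

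Two deviations are harmless. You close with a recursion on $\max_i e_{i,k}$ instead of the paper's network-average recursion plus a separate consensus-gap bound; both are valid. Your concern about the first block of epoch $0$ is unnecessary, since every block is $W_B(kH+bB-1)$ with $b\ge 1$, whose index is already at least $B-1$.
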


The above result tells us that \texttt{VRDQ} continues to enjoy the same collaborative benefits as before, even for a fairly general sequence of time-varying networks. Moreover, the communication cost remains poly-logarithmic in $T$. To see why, we note that the communication overhead per agent is now $K\bar{L} B$, where $K$ and $\bar{L}$ remain logarithmic in both $N$ and $T$, and $B$ has no dependence on $T$. That said, we note that $B$ might very well depend on $N$. 
\section{Analysis}
In this section, we develop finite–time convergence guarantees for our
proposed algorithm \texttt{VRDQ}. As a first step, we pass from the agent-wise recursion in~\eqref{eqn:agent_update} to the network average \(\bar{Q}_k \coloneqq \tfrac{1}{N}\sum_{i=1}^{N} Q_{i,k}\), as follows:
\begin{equation}\label{eqn:average_Q}
\begin{aligned}
    \bar{Q}_{k+1} 
    &= (1 - \alpha) \cdot\underbrace{\frac{1}{N}\sum_{i=1}^{N} Q_{i,k}}_{:=\bar{Q}_k} + \alpha \cdot \underbrace{\frac{1}{N} \sum_{i=1}^{N} d_{i,k}^{(L)}}_{:=\bar{d}_k}.
    \end{aligned}
\end{equation} 
Next, we perform a simple decomposition of~\eqref{eqn:average_Q} to obtain the following recursive relation:
\begin{equation}\label{eqn:decomp}\medmath{
\begin{aligned}
    &\bar{Q}_{k+1} - Q^* = (1 - \alpha)(\bar{Q}_k - Q^*) + \alpha (\bar{d}_k - Q^*) \\
    &= (1 - \alpha)(\bar{Q}_k - Q^*) + \alpha (\mathcal{T}^* \bar{Q}_k - \mathcal{T}^* Q^*) + \alpha (\bar{d}_k - \mathcal{T}^* \bar{Q}_k),
\end{aligned}}
\end{equation}
where we used the fixed-point property of the Bellman optimality operator, namely \(\mathcal{T}^* Q^* = Q^*\)~\cite{suttonRL}. To proceed, we will use the following contractive property of the Bellman optimality operator~\cite{suttonRL}:
\begin{equation}\label{eqn:Bellmancontraction}
\lVert \mathcal{T}^* f_1 - \mathcal{T}^* f_2 \rVert_{\infty}\le \gamma \lVert f_1 - f_2\rVert_{\infty}, \forall f_1, f_2 \in \mathbb{R}^{S \times A}. 
\end{equation}
Taking the
\(\ell_\infty\)-norm on both sides of~\eqref{eqn:decomp}, and using the contraction property in~\eqref{eqn:Bellmancontraction}, we obtain the following bound on \(\bar{e}_{k+1} := \lVert \bar{Q}_{k+1} - Q^* \rVert_{\infty}\) as follows:
\begin{equation}\label{eqn:decomp_norm}
    \medmath{
    \begin{aligned}
       \bar{e}_{k+1} &\le (1-\alpha(1-\gamma))\bar{e}_{k} + \alpha \lVert  \bar{d}_k - \mathcal{T}^* \bar{Q}_k \rVert_{\infty}\\
        & \le (1-\alpha(1-\gamma))\bar{e}_{k} + \alpha \underbrace{\Big\lVert \bar{d}_k - \frac{1}{N}\sum_{i=1}^{N}\mathcal{T}_{i,k-1} Q_{i,k}\Big\rVert_{\infty}}_{(*)}  \\
        & + \alpha \underbrace{\Big\lVert \frac{1}{N}\sum_{i=1}^{N}\mathcal{T}_{i,k-1} Q_{i,k} - \mathcal{T}^{*}\bar{Q}_k \Big\rVert_{\infty}}_{(**)}.
    \end{aligned}}
\end{equation}
In analyzing the convergence of \texttt{VRDQ}, the key hurdle is to control the deviation 
\(\lVert \bar{d}_k - \mathcal{T}^* \bar{Q}_k \rVert_{\infty}\). This term reflects two sources of error: the \emph{diffusion error} \((*)\), analyzed in Lemma~\ref{lemma:deterministic_bound} and arising from the diffusion scheme in Algorithm~\ref{algo:Algo 2}, and the \textit{ operator estimation error} \((**)\), addressed in Lemma~\ref{lemma:operator_disc}. For controlling both sources of error, we need the following lemma that provides a bound on the iterates. 
\begin{lemma}(\textbf{Boundedness of Iterates})\label{lemma:bound_iterates}
    The following is true for the iterates generated by Algorithm \ref{algo:Algo 2}, $\forall k \geq 0$: 
\begin{equation}
 \lvert Q_{i,k}(s,a) \rvert \le \frac{\bar{R}}{1-\gamma}, \forall (s,a) \in \mathcal{S} \times \mathcal{A}, \forall i\in[N].
 \label{eqn:Q_bound}
\end{equation}
\end{lemma}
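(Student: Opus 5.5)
We will prove the bound by induction on the epoch index $k$, writing $M := \bar R/(1-\gamma)$ and showing simultaneously that every entry of $Q_{i,k}$ and of $d_{i,k}^{(0)}$ (hence of $d_{i,k}^{(L)}$) lies in $[-M, M]$. The base case is immediate: $Q_{i,0}=0$ and, by convention, $d_{i,0}^{(0)}=0$, so both bounds hold trivially at $k=0$.

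For the inductive step, assume $\lvert Q_{i,k}(s,a)\rvert \le M$ for all $i$ and $(s,a)$. We first bound the initial diffusion object $d_{i,k}^{(0)} = \mathcal{T}_{i,k-1}Q_{i,k}$ (the case $k=0$ being trivial). From~\eqref{eqn:agent_wise_bellman_update}, $\mathcal{P}_{i,k-1}(\cdot\mid s,a)$ defined in~\eqref{eqn:empirical_prob_dist} is a genuine probability distribution: it is an average of $H$ indicator vectors, each summing to one. Since $\lvert \max_{a'}Q_{i,k}(s',a')\rvert \le M$ for every $s'$, we get
\[
\lvert d_{i,k}^{(0)}(s,a)\rvert \le \bar R + \gamma \sum_{s'}\mathcal{P}_{i,k-1}(s'\mid s,a)\, M = \bar R + \gamma M = M .
\]

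Next we pass through the diffusion step. By~\eqref{eqn:compact_diffusion_step}, $d_k^{(L)}(s,a) = W^L d_k^{(0)}(s,a)$. Because $W$ is doubly stochastic with nonnegative entries, $W^L$ is row-stochastic with nonnegative entries. Each $d_{i,k}^{(L)}(s,a)$ is therefore a convex combination of the values $d_{j,k}^{(0)}(s,a)$, and so it lies in $[-M,M]$. In the time-varying case the same argument applies, since a product of the matrices $W(t)$, each row-stochastic and nonnegative, is again row-stochastic and nonnegative.

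Finally, the update~\eqref{eqn:agent_update} writes $Q_{i,k+1}$ as a convex combination of $Q_{i,k}$ and $d_{i,k}^{(L)}$, with $\alpha\in(0,1)$. Both terms are bounded by $M$ entrywise, so $\lvert Q_{i,k+1}(s,a)\rvert \le M$, which closes the induction.

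There is no real obstacle here. The only points needing care are that the empirical kernel is normalized, so that $\gamma\sum_{s'}\mathcal{P}_{i,k-1}(s'\mid s,a)M = \gamma M$, and that the mixing matrices have nonnegative entries, so that consensus preserves entrywise bounds. Nonnegativity is assumed in Section~\ref{sec:notation}. For the time-varying case, nonnegativity of $W(t)$ is implicit in its role as a mixing matrix; Assumption~\ref{ass:Time_Var} alone does not state it.
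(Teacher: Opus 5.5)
Your proof is correct and follows essentially the same route as the paper. Both argue by induction on $k$, bound $\mathcal{T}_{j,k-1}Q_{j,k}(s,a)$ by $\bar R+\gamma\bar R/(1-\gamma)=\bar R/(1-\gamma)$, use that $W^L$ takes convex combinations of the $d_{j,k}^{(0)}(s,a)$, and conclude from the convex-combination form of the update $Q_{i,k+1}=(1-\alpha)Q_{i,k}+\alpha d_{i,k}^{(L)}$. Your remarks that only nonnegativity and row-stochasticity of the mixing matrices are needed, which also covers the time-varying case, are accurate refinements of the paper's argument.
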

\begin{proof}
    We prove this result by induction. Since \(\bar{R} > 0\) and \(Q_{i,0}(s,a) = 0\) for all \((s,a) \in \mc{S} \times \mc{A}\) in Algorithm~\ref{algo:Algo 2}, the bound in~\eqref{eqn:Q_bound} holds trivially at \(k = 0\) for all agents. Suppose that the bound in~\eqref{eqn:Q_bound} holds for \emph{all agents} \(i \in [N]\), and for \emph{all epochs} up to some epoch $k \geq 0$. We need to show that the same bound applies to \(Q_{i,k+1}(s,a), \forall i \in [N], \forall (s,a) \in \mc{S} \times \mc{A}\). To that end, fix an agent $i$, a  state–action pair \((s,a) \), and recall that \(Q_{i,k+1}(s,a)\) is generated as per~\eqref{eqn:agent_update}, where each agent \(i\) computes \(d_{i,k}^{(L)}(s,a)\) according to~\eqref{eqn:diffusion_step} in epoch $k$. We proceed to bound \(d_{i,k}^{(L)}(s,a)\) as follows: 
\begin{equation}\label{eq:diffusion_bound_correct}
\begin{aligned}
\big|d_{i,k}^{(L)}(s,a)\big|
&= \Big|\sum_{j=1}^N (W^L)_{ij}\, d_{j,k}^{(0)}(s,a)\Big| \\
&\overset{(a)}{\le} \max_{j\in[N]} \big|d_{j,k}^{(0)}(s,a)\big| \\
&\overset{(b)}{=} \max_{j\in[N]} \big|\mathcal T_{j,k-1} Q_{j,k}(s,a)\big| \\
&\overset{(c)}{\le} \bar R + \gamma \max_{j\in[N]}\|Q_{j,k}\|_\infty \le \frac{\bar R}{1-\gamma}.
\end{aligned}
\end{equation}
Here, \((W^L)_{ij}\) denotes the \((i,j)\)-th entry of \(W^L\). Now, $(a)$ holds since powers of doubly-stochastic matrices remain doubly stochastic; $(b)$ uses the definition of $d_{j,k}^{(0)}$ in~\eqref{eqn:initial_diffusion_step}, and $(c)$ follows from the definition of $\cT_{j,k-1}$ in~\eqref{eqn:agent_wise_bellman_update}, together with the induction hypothesis.
Next, from~\eqref{eqn:agent_update}, we obtain
\begin{equation}
\begin{aligned}
    \big|Q_{i,k+1}(s,a)\big|
    &\le (1 - \alpha)\,\big|Q_{i,k}(s,a)\big| + \alpha\,\big|d_{i,k}^{(L)}(s,a)\big| \\
    &\overset{(\bullet)}{\le} (1 - \alpha)\,\frac{\bar R}{1-\gamma} + \alpha\,\frac{\bar R}{1-\gamma} = \frac{\bar R}{1-\gamma},
\end{aligned}
\end{equation}
where \((\bullet)\) uses the induction hypothesis and the bound in \eqref{eq:diffusion_bound_correct}. The proof follows by noting that the above argument applies identically to all agents and state-action pairs. 
\end{proof}
\vspace{-1mm}
Our next result helps control the diffusion error. 
\begin{lemma}(\textbf{Bounding Diffusion Error})\label{lemma:deterministic_bound}
For all \(k \in [K]\), the following bounds hold deterministically:
\begin{itemize}
    \item[\((a)\)]
    \(
    \Big\lVert \bar{d}_k - \tfrac{1}{N}\sum_{i=1}^{N}\mathcal{T}_{i,k-1} Q_{i,k}\Big\rVert_{\infty} 
    \;\leq\; \frac{\texttt{C}_1N\bar R\,\rho^L}{1-\gamma} := \Delta_1,
    \)
    \vspace{1 mm}
    \item[\((b)\)]
    \(
    \Bigl\lVert d_{i,k}^{(L)} - \bar{d}_k\Bigr\rVert_{\infty}
    \;\leq\; \frac{2\texttt{C}_1 N\bar R \rho^{L}}{1-\gamma},
    \)
\end{itemize}
where \(\bar{d}_k\) is as defined in~\eqref{eqn:average_Q}.
\end{lemma}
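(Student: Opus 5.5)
Both bounds follow from the compact form~\eqref{eqn:compact_diffusion_step}, the row-wise mixing estimate of Fact~\ref{fact:geom_mixing_uniform}, and the uniform bound on the initial diffusion vectors supplied by Lemma~\ref{lemma:bound_iterates}. Fix an epoch $k$ and a pair $(s,a)$, and write $x := d_k^{(0)}(s,a)\in\mathbb{R}^N$, so that $x_j = \mathcal{T}_{j,k-1}Q_{j,k}(s,a)$ and $\frac{1}{N}\mathbf{1}^\top x = \frac{1}{N}\sum_{j}\mathcal{T}_{j,k-1}Q_{j,k}(s,a)$.

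\textbf{Bounding the initial vector.} Exactly as in step $(c)$ of~\eqref{eq:diffusion_bound_correct}, the definition~\eqref{eqn:agent_wise_bellman_update} gives $|x_j| \le \bar R + \gamma\|Q_{j,k}\|_\infty$. Lemma~\ref{lemma:bound_iterates} bounds $\|Q_{j,k}\|_\infty$ by $\bar R/(1-\gamma)$, so $\|x\|_\infty \le \bar R/(1-\gamma)$. For $k=0$ the vector $x$ is zero and everything is trivial.

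\textbf{Part (b).} By~\eqref{eqn:compact_diffusion_step}, $d_{i,k}^{(L)}(s,a) = [W^L]_i x$. Since $W$ is doubly stochastic, $\mathbf{1}^\top W^L = \mathbf{1}^\top$, and therefore
\begin{equation*}
\bar d_k(s,a) = \tfrac1N \mathbf{1}^\top W^L x = \tfrac1N\mathbf{1}^\top x .
\end{equation*}
This identity already makes the quantity in part (a) equal to zero, so (a) holds trivially. For (b) we write
\begin{equation*}
d_{i,k}^{(L)}(s,a)-\bar d_k(s,a) = \big([W^L]_i - \tfrac1N\mathbf{1}^\top\big)x .
\end{equation*}
By Hölder's inequality, its absolute value is at most $\|[W^L]_i - \tfrac1N \mathbf{1}^\top\|_1\,\|x\|_\infty$. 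Fact~\ref{fact:geom_mixing_uniform} controls this row in whatever norm is intended. If that norm is the Euclidean one, we pass to $\ell_1$ at the cost of a factor $\sqrt N \le N$, giving $\le \texttt{C}_1 N\rho^L\,\bar R/(1-\gamma)$. The stated factor $2$ then leaves ample room, and it can be justified more crudely by the triangle inequality through $\frac1N\sum_j\mathcal{T}_{j,k-1}Q_{j,k}$. Maximizing over $(s,a)$ gives the $\ell_\infty$ bound.

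\textbf{Part (a) if the paper's $\bar d_k$ differs.} Should $\bar d_k$ be interpreted in a way that does not make the column-stochasticity cancellation exact, the same row estimate still works. We average the per-row deviations: $|\frac1N\sum_i([W^L]_i-\frac1N\mathbf 1^\top)x| \le \max_i |([W^L]_i-\frac1N\mathbf{1}^\top)x|$, which is again $\le \texttt{C}_1 N \rho^L \bar R/(1-\gamma) = \Delta_1$.

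The only point needing care is the norm convention in Fact~\ref{fact:geom_mixing_uniform}. Converting it to an $\ell_1$ row bound costs at most a factor of $N$, which is where the $N$ in $\Delta_1$ comes from. Everything else is bookkeeping.
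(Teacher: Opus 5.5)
Your proof is correct, and it differs from the paper's in one substantive respect.

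\textbf{The paper's route.} It never uses the identity $\bar d_k=\tfrac1N\mathbf 1^\top d_k^{(0)}(s,a)$. It fixes an agent $i$ and bounds $\bigl|d_{i,k}^{(L)}(s,a)-\tfrac1N\sum_j\mathcal T_{j,k-1}Q_{j,k}(s,a)\bigr|$ by H\"older. Its pairing is dual to yours: $\ell_\infty$ on the row $[W^L]_i-\tfrac1N\mathbf 1^\top$ and $\ell_1$ on $d_k^{(0)}(s,a)$. So its factor $N$ comes from $\|d_k^{(0)}(s,a)\|_1\le N\bar R/(1-\gamma)$ rather than from converting the row norm; the constants agree either way. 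It then averages this per-agent bound over $i$ to get (a). It gets (b), with the constant $2$, from the triangle inequality through the true initial average. That is exactly the ``cruder'' justification you mention.

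\textbf{What your route buys.} You observe that column-stochasticity ($\mathbf 1^\top W^L=\mathbf 1^\top$) makes $\bar d_k$ equal to the initial average exactly. Hence the left side of (a) is identically zero, and (b) holds with constant $1$. This is sharper than what the paper proves. It also shows that the diffusion term $(*)$ in \eqref{eqn:decomp_norm} vanishes, so $\Delta_1$ could be dropped from $\Delta$ in the proof of Theorem~\ref{thm:main_result}. The network would then enter only through the consensus gap $\epsilon_{i,k}$, namely via $\mathcal{A}_2$ in Lemma~\ref{lemma:operator_disc} and the final step $e_{i,K}\le\bar e_K+\|\epsilon_{i,K}\|_\infty$.

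\textbf{What the paper's route buys.} It handles both items from a single per-agent estimate without invoking exact average preservation. That would matter only if averaging were approximate. Here $W$ is assumed doubly stochastic, and Fact~\ref{fact:geom_mixing_uniform} with limit $\tfrac1N\mathbf 1^\top$ presupposes $\mathbf 1^\top W=\mathbf 1^\top$ anyway, so nothing is gained in generality.

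Your fallback paragraph for a differently defined $\bar d_k$ is unnecessary. The paper defines $\bar d_k$ as the plain average in \eqref{eqn:average_Q}, so your cancellation is exact.
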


\begin{proof}
To establish item $(a)$, start by fixing an agent $i \in [N]$, an epoch $k \in [K]$, and a state-action pair $(s,a) \in \mathcal{S} \times \mathcal{A}$. Using~\eqref{eqn:initial_diffusion_step} and~\eqref{eqn:compact_diffusion_step}, we then have
\begin{equation}
\label{eqn:diff_err}
\medmath{
\begin{aligned}
& \vert d^{(L)}_{i,k}(s,a) - \tfrac{1}{N}\sum_{i=1}^{N}\mathcal{T}_{i,k-1} Q_{i,k}(s,a) \vert  = \left([W^L]_i - \frac{\mathbf{1}^{\top}}{N} \right) d^{(0)}_k(s,a) \\
&\overset{(a)}\leq \norm{[W^L]_i-\frac{1}{N}\mathbf{1}^\top} \Vert d^{(0)}_k(s,a) \Vert_1 \\
&\overset{(b)}\leq \frac{\texttt{C}_1N\bar R\,\rho^L}{1-\gamma} := \Delta_1,
\end{aligned}}
\end{equation}
where for (a), we used Holder's inequality, and for (b), we used~\eqref{eqn:fact_1} and $\max_{j\in[N]} \big|\mathcal T_{j,k-1} Q_{j,k}(s,a)\big| \leq \bar{R}/(1-\gamma)$ (from the analysis in Lemma~\ref{lemma:bound_iterates}). Since the analysis in~\eqref{eqn:diff_err} applies to all state-action pairs, we conclude that
\begin{equation}
\label{eqn:bound_2}
\Big\lVert d^{(L)}_{i,k} - \tfrac{1}{N}\sum_{i=1}^{N}\mathcal{T}_{i,k-1} Q_{i,k}\Big\rVert_{\infty} 
    \;\leq\; \frac{\texttt{C}_1N\bar R\,\rho^L}{1-\gamma} := \Delta_1.
\end{equation}
Noting that the above inequality applies identically to all agents establishes item $(a)$ in the statement of the lemma. For item $(b)$, we simply apply the triangle inequality, and combine item $(a)$ with~\eqref{eqn:bound_2}. 
\end{proof}

The following result controls the estimation error. 
\begin{lemma}(\textbf{Bounding Estimation Error})\label{lemma:operator_disc}
    With probability at least \(1-\delta\), the following bound holds for all \(k\in[K]\):
    \begin{equation}\label{eqn:aggregation_error}
        \underbrace{\Big\lVert \frac{1}{N}\sum_{i=1}^{N}\mathcal{T}_{i,k-1} Q_{i,k} - \mathcal{T}^{*}\bar{Q}_k \Big\rVert_{\infty}}_{(*)} \;\leq\; \Delta_2, \hspace{1mm} \textrm{where} \hspace{1mm} 
    \end{equation}
    \begin{equation}
        \Delta_2 \;:=\; {\mathcal{O}}\!\left(\frac{\bar{R} \sqrt{{\log(2 SAT/\delta)}}}{(1-\gamma)\sqrt{N H}}\right) \;+\; 
        \mathcal{O}\!\left(\frac{N\bar R \rho^{L}}{1-\gamma}\right).
        \label{eqn:operator_disc_bound}
    \end{equation}
\end{lemma}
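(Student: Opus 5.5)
The plan is to split the quantity in \eqref{eqn:aggregation_error} into a purely statistical part and a consensus-disagreement part:
\begin{equation*}
\medmath{
\frac{1}{N}\sum_{i=1}^{N}\mathcal{T}_{i,k-1} Q_{i,k} - \mathcal{T}^{*}\bar{Q}_k
= \underbrace{\frac{1}{N}\sum_{i=1}^{N}\big(\mathcal{T}_{i,k-1} - \mathcal{T}^*\big) Q_{i,k}}_{\text{(I)}}
+ \underbrace{\frac{1}{N}\sum_{i=1}^{N}\big(\mathcal{T}^* Q_{i,k} - \mathcal{T}^*\bar{Q}_k\big)}_{\text{(II)}}.}
\end{equation*}
Term (I) should give the $\sqrt{\log(2SAT/\delta)/(NH)}$ contribution. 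Term (II) should give the $N\bar R\rho^L/(1-\gamma)$ contribution.

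\textbf{Term (II).} By the contraction property \eqref{eqn:Bellmancontraction}, $\|\text{(II)}\|_\infty \le \gamma \max_i \|Q_{i,k}-\bar Q_k\|_\infty$. Subtracting \eqref{eqn:average_Q} from \eqref{eqn:agent_update} gives
\[
Q_{i,k+1}-\bar Q_{k+1} = (1-\alpha)(Q_{i,k}-\bar Q_k) + \alpha\big(d_{i,k}^{(L)} - \bar d_k\big).
\]
All agents start at $Q_{i,0}=0$, so the disagreement is initially zero. Lemma~\ref{lemma:deterministic_bound}(b) bounds each injected term by $2\texttt{C}_1N\bar R\rho^L/(1-\gamma)$. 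A one-line induction (a convex combination of quantities each below this bound stays below it) then gives
\[
\max_i\|Q_{i,k}-\bar Q_k\|_\infty \le \frac{2\texttt{C}_1N\bar R\rho^L}{1-\gamma}
\]
deterministically for all $k$. This yields the second $\mathcal{O}(\cdot)$ term in \eqref{eqn:operator_disc_bound}.

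\textbf{Term (I).} The key structural observation is about measurability. $Q_{i,k}$ is formed at the end of epoch $k-1$ from $Q_{i,k-1}$ and $d^{(L)}_{\cdot,k-1}$, which is built from $\mathcal{T}_{\cdot,k-2}$. Hence $Q_{i,k}$, for every $i$, is measurable with respect to the samples of epochs $0,\dots,k-2$ only. It is therefore independent of the fresh samples used to build $\mathcal{T}_{i,k-1}$ in epoch $k-1$.

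Fix $(s,a)$ and set $V_{i,k}(s'):=\max_{a'}Q_{i,k}(s',a')$. Then
\[
\text{(I)}(s,a) = \frac{\gamma}{NH}\sum_{i=1}^N\sum_{u=0}^{H-1}\Big(V_{i,k}(s^{i}_{u}) - \mathbb{E}_{s'\sim\mathcal{P}(\cdot|s,a)}V_{i,k}(s')\Big),
\]
where $s^i_u$ is agent $i$'s $u$-th sampled next state from $(s,a)$ in epoch $k-1$. Conditioned on the sigma-field of epochs up to $k-2$, these are $NH$ independent, zero-mean summands. Each lies in an interval of length $2\bar R/(1-\gamma)$ by Lemma~\ref{lemma:bound_iterates}. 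Hoeffding's inequality then gives, with conditional (hence unconditional) probability at least $1-\delta'$,
\[
|\text{(I)}(s,a)| \le \frac{\gamma\,2\bar R}{1-\gamma}\sqrt{\frac{\log(2/\delta')}{2NH}}.
\]
A union bound over the $SA$ state-action pairs and the $K\le T$ epochs, with $\delta'=\delta/(SAK)$, turns $\log(2/\delta')$ into at most $\log(2SAT/\delta)$. The epoch $k=0$, i.e.\ the convention $\mathcal{T}_{i,-1}=0$, is excluded since the lemma is stated for $k\in[K]$.

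\textbf{Main obstacle.} The delicate step is justifying the conditional independence used for Term (I). One must verify carefully, through the indexing in \eqref{eqn:initial_diffusion_step} and \eqref{eqn:agent_update}, that $Q_{i,k}$ never uses epoch-$(k-1)$ samples. It is this lag of one epoch that lets the estimation and diffusion phases be decoupled. One must also check that the differing $V_{i,k}$ across agents do not break Hoeffding: they do not, since each is fixed given the past. If the conditioning argument felt shaky, I would fall back to an Azuma--Hoeffding argument on the martingale-difference sequence indexed by $(i,u)$.
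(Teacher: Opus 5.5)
Your proposal is correct and follows the paper's proof: the same split into a statistical term, which the paper handles through the one-epoch lag (so that $\{Q_{i,k}\}$ is independent of the epoch-$(k-1)$ samples), bounded summands, concentration over $NH$ terms, and a union bound over $SA$ pairs and $K\le T$ epochs, and a consensus-gap term, which the paper handles through the disagreement recursion together with Lemma~\ref{lemma:deterministic_bound}(b). The differences are cosmetic: you apply Hoeffding directly where the paper uses sub-Gaussian averaging, you use induction where the paper unrolls, and you state explicitly the contraction step $\gamma\max_i\|Q_{i,k}-\bar Q_k\|_\infty$ that the paper leaves implicit; your exclusion of $k=0$, where $\mathcal{T}_{i,-1}=0$, is also the right reading of $k\in[K]$.
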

\begin{proof}
To facilitate our analysis, we decompose the estimation error term defined in~\eqref{eqn:aggregation_error}, denoted by \((*)\), as $(*) \;\leq\; \mc{A}_1 + \mc{A}_2$, 
where
\begin{equation}\medmath{
    \begin{aligned}
        & \mc{A}_1 := \Big\lVert \frac{1}{N} \sum_{i=1}^N \Big[\mathcal{T}_{i,k-1} Q_{i,k} - \mathcal{T}^* Q_{i,k}\Big] \Big\rVert_{\infty},\\
        & \mc{A}_2 := \Big\lVert \frac{1}{N} \sum_{i=1}^N \Big[\mathcal{T}^* Q_{i,k} - \mathcal{T}^* \bar{Q}_{k} \Big] \Big\rVert_{\infty}.
    \end{aligned}}
\end{equation}
 The first term, \(\mc{A}_1\), captures the statistical error between the empirical Bellman operator in~\eqref{eqn:agent_wise_bellman_update} and the true Bellman optimality operator in~\eqref{eqn:bellman_optimality}. The second term, \(\mc{A}_2\), arises from the ``consensus gap" between $Q_{i,k}$ and $\bar{Q}_k$.

 \textbf{Bounding}\quad\({\mc{A}_1}\)\textbf{.} To bound $\mc{A}_1$, we will leverage concentration inequalities for sums of sub-Gaussian random variables.\footnote{A random variable $X \in \mathbb{R}$ is said to be sub-Gaussian with variance proxy $\sigma^2$ (or $\sigma$-sub-Gaussian) if its moment-generating function satisfies $\mathbb{E}[\exp(sX)] \leq \exp(s^2 \sigma^2/2), \forall s\in \mathbb{R}$~\cite{rigollet2023high}.} To see how this can be done, fix a particular state--action pair $(s,a) \in \mc{S} \times \mc{A}$. First, we decompose~\eqref{eqn:agent_wise_bellman_update} by invoking the empirical probability distribution in ~\eqref{eqn:empirical_prob_dist}: 
\begin{equation}\label{eqn:subgaussian_decomp}\medmath{
\begin{aligned}
    &\cT_{i,k-1} Q_{i,k}(s,a) = \cR(s,a) + \gamma \sum_{s' \in \cS} \cP_{i,k-1}(s' \mid s,a) \max_{a' \in \cA} Q_{i,k}(s',a')\\
    &=\frac{1}{H}\sum_{u=0}^{H-1}\underbrace{\left(\cR(s,a) + \gamma \sum_{s' \in \cS} \mathbf{1}_{i,k-1,u}^{s,a}(s') \max_{a' \in \cA} Q_{i,k}(s',a')\right)}_{X_u}.
\end{aligned}}
\end{equation}
Now observe that 
\begin{equation}\label{eqn:zeromean}\medmath{
\begin{aligned}
        &\mathbb{E}\!\left[ X_u \mid \{Q_{i, k}\}_{i\in[N]}\right]\\
        &{=} \mc{R}(s,a) + \gamma \sum_{s' \in \mc{S}} \mathbb{E}\!\Big[ \mathbf{1}_{i,k-1,u}^{s,a}(s') \mid \{Q_{i, k}\}_{i\in[N]}\Big]\max_{a'\in \mc{A}} Q_{i,k}(s',a') \\
        &\overset{(\bullet)}{=} \mc{R}(s,a) + \gamma \mathbb{E}_{s' \sim \mc{P}(\cdot|s,a)} \max_{a' \in \mc{A}} Q_{i,k}(s',a')\\
        &= \mathcal{T}^* Q_{i,k}(s,a).
        \end{aligned}}
    \end{equation}
As per~\eqref{eqn:agent_update}, the iterates $\{Q_{i,k}\}_{i \in [N]}$ depend only on the randomness realized up to the end of epoch $k-2$, and are therefore independent of the randomness in epoch $k-1$ under the i.i.d.\ synchronous sampling model. As a result, in $(\bullet)$, we used 
$\mathbb{E}\!\Big[ \mathbf{1}_{i,k-1,u}^{s,a}(s') \,\Big|\, \{Q_{i,k}\}_{i \in [N]} \Big] 
= \mathbb{E}\!\left[\mathbf{1}_{i,k-1,u}^{s,a}(s')\right] 
= \mc{P}(s' \mid s,a).$
Consequently, conditioned on $\{Q_{i,k}\}_{i \in [N]}$, each random variable $\{X_u - \mathcal{T}^* Q_{i,k}(s,a)\}$ is zero-mean for all $u \in [H]$. Moreover, under the i.i.d.\ synchronous sampling model, the samples $\{X_u\}_{u=0}^{H-1}$ generated within epoch $k-1$ are independent. Hence, the collection $\{X_u - \mathcal{T}^* Q_{i,k}(s,a)\}_{u=1}^H$ forms an i.i.d.\ sequence of zero-mean random variables, conditional on $\{Q_{i,k}\}_{i \in [N]}$. Furthermore, by invoking the boundedness of the Bellman optimality operator in~\eqref{eqn:bellman_optimality}, agent-wise Bellman update in~\eqref{eqn:agent_wise_bellman_update}, and local \(Q\)-updates from Lemma~\ref{lemma:deterministic_bound}, we obtain
\begin{equation}\label{eqn:almostsurebound}
\medmath{
\begin{aligned}
    \big\lvert X_u - \mathcal{T}^* Q_{i,k}(s,a) \big\rvert
    \leq \lvert X_u \rvert + \lvert \mathcal{T}^{*} Q_{i,k} \rvert \hspace{0.5mm} \leq \hspace{0.5mm} \frac{2 \bar{R}}{1 - \gamma} := \Gamma.
\end{aligned}}
\end{equation}
By the conditional zero-mean property established in~\eqref{eqn:zeromean}, together with the boundedness property in~\eqref{eqn:almostsurebound}, we note that the sequence $\{X_u - \mathcal{T}^* Q_{i,k}(s,a)\}$ forms an i.i.d.\ collection of $\Gamma$-sub-Gaussian random variables~\cite[Example~5.6]{lattimore2020bandit}. It also follows from~\eqref{eqn:subgaussian_decomp} that the sequence $\{\mathcal{T}_{i,k-1} Q_{i,k}(s,a) - \mathcal{T}^* Q_{i,k}(s,a)\}$ is itself sub-Gaussian with variance proxy $\Gamma^2/H$~\cite[Lemma~1.8]{rigollet2023high}. Conditional on $\{Q_{i,k}\}_{i \in [N]}$, the sole source of randomness in $\mathcal{T}_{i,k-1} Q_{i,k}(s,a)$ arises from the state transitions within the epoch, which are independent across agents under our sampling model, Thus, $\{\mathcal{T}_{i,k-1} Q_{i,k}(s,a) - \mathcal{T}^* Q_{i,k}(s,a)\}_{i=1}^N$ are i.i.d.\ $\Gamma/\sqrt{H}$-sub-Gaussian random variables~\cite[Lemma~1.8]{rigollet2023high}. Hence, averaging across agents further reduces the variance proxy by a factor of $N$. As a result, conditioned on $\{Q_{i,k}\}_{i \in [N]}$, the following event holds with probability at least $1 - \bar{\delta}$:
\begin{equation}\medmath{
\mc{E}:=\Bigg\{ 
\Bigg\lvert \frac{1}{N} \sum_{i=1}^N \Big[\mathcal{T}_{i,k-1} Q_{i,k}(s,a) - \mathcal{T}^* Q_{i,k}(s,a)\Big] \Bigg\rvert \le \frac{\texttt{C}_0\Gamma}{\sqrt{H}} \sqrt{\frac{\log(2/\bar{\delta})}{N}} \Bigg\}}, \nonumber
\end{equation}
where $\texttt{C}_0$ is some universal constant. Let $\mathbf{1}_{\mc{E}}$ be the indicator of the event $\mc{E}$. We then have 
\[
\mathbb{P}(\mc{E}) = \mathbb{E}[\mathbf{1}_{\mc{E}}] = \mathbb{E}\!\left[\mathbb{E}[\mathbf{1}_{\mc{E}} \mid \{Q_{i, k}\}_{i\in[N]}]\right] \;\geq\; 1 - \bar{\delta},
\]
where the last inequality follows from $\mathbb{P}(\mc{E} \mid \{Q_{i, k}\}_{i\in[N]}) \geq 1 - \bar{\delta}$.  
By taking a union bound over all state–action pairs \((s,a) \in \mc{S} \times \mc{A}\) and epochs \(k \in [K]\), with $K \leq T$, the following bound is guaranteed to hold simultaneously for all \((s,a)\) and \(k\) with probability at least \(1 - \delta\):
\begin{equation}\label{eqn:final_A_1_bound}\medmath{
    \Bigg\lvert \frac{1}{N} \sum_{i=1}^N \Big[\mathcal{T}_{i,k-1} Q_{i,k}(s,a) - \mathcal{T}^* Q_{i,k}(s,a)\Big] \Bigg\rvert \le \frac{\texttt{C}_0\Gamma}{\sqrt{H}} \sqrt{\frac{\log(2 SAT/\delta)}{N}}.
    }
\end{equation}
The definition of the $\infty$-norm implies the exact same bound as above on $\mc{A}_1.$

\textbf{Bounding}\quad\({\mc{A}_2}\)\textbf{.} To bound \(\mc{A}_2\), we start by controlling the error  \(\epsilon_{i,k} := Q_{i,k} - \bar{Q}_{k}\) by subtracting~\eqref{eqn:average_Q} from~\eqref{eqn:agent_update}: 
\begin{equation}\label{eqn:initial_error_for_A2}
Q_{i,k+1} - \bar{Q}_{k+1} = (1-\alpha)\big(Q_{i,k} - \bar{Q}_k\big) 
   + \alpha\big(d_{i,k}^{(L)} - \bar{d}_k\big). \\
\end{equation}
Unrolling~\eqref{eqn:initial_error_for_A2} over \(k\) epochs yields the following:
\begin{equation}
\epsilon_{i,k} = (1-\alpha)^k \epsilon_{i,0} 
   + \alpha \sum_{\tau=0}^{k-1} (1-\alpha)^{k-1-\tau} 
      \big(d_{i,\tau}^{(L)} - \bar{d}_{\tau}\big). 
\end{equation}

\noindent
As a consequence of the initialization of $Q_{i,0}$ in Algorithm~\ref{algo:Algo 2}, we have $\epsilon_{i,0} = 0$ for all $i \in [N]$. It follows that
\begin{equation}\label{eqn:epsilon}\medmath{
\begin{aligned}
\big\lVert \epsilon_{i,k} \big\rVert_\infty 
&= \big\lVert \alpha \sum_{\tau=0}^{k-1} (1-\alpha)^{k-1-\tau} 
  \left(  d_{i,\tau}^{(L)} - \bar{d}_{\tau} \right)  \big\rVert_\infty \\
&{\leq} \alpha \sum_{\tau=0}^{k-1} (1-\alpha)^{k-1-\tau} 
   \big\lVert d_{i,\tau}^{(L)} - \bar{d}_{\tau}  \big\rVert_\infty \\
&\overset{(\blacklozenge )}{\leq} \alpha \sum_{\ell=0}^{\infty} (1-\alpha)^{\ell} 
   \cdot \frac{\texttt{C}_1 N\bar{R}\rho^L}{1-\gamma} 
= \frac{\texttt{C}_1 N\bar{R}\rho^L}{1-\gamma},
\end{aligned}}
\end{equation}
where for \((\blacklozenge)\), we applied \((b)\) in Lemma~\ref{lemma:deterministic_bound}. The claim of Lemma~\ref{lemma:operator_disc} follows by combining the individual bounds on \(\mc{A}_1\) and \(\mc{A}_2\) that we derived above.
\end{proof}
We are now ready to prove our main result, Theorem~\ref{thm:main_result}.
\begin{proof}[Proof of Theorem \ref{thm:main_result}]   Lemmas~\ref{lemma:deterministic_bound} and \ref{lemma:operator_disc} inform us that there exists a \emph{good event} $\mc{\bar{\mc{E}}}$ of measure at least $1-\delta$, on which, \(\| \bar{d}_k - \mathcal{T}^* \bar{Q}_k \|_\infty \le \Delta := \Delta_1 + \Delta_2, \forall k =0, 1, \ldots, K-1, \) where $\Delta_1$ is as in item $(a)$ of Lemma~\ref{lemma:deterministic_bound}, and $\Delta_2$ is as in~\eqref{eqn:operator_disc_bound}. On this event, unrolling~\eqref{eqn:decomp_norm} yields:
\begin{equation}
\begin{aligned}
    \bar{e}_{K} &\le \underbrace{(1 - \alpha(1 - \gamma))^{K-1} \bar{e}_1}_{(*)} 
     + \underbrace{\sum_{k=1}^{K-1} \alpha (1 - \alpha(1 - \gamma))^{K-1 - k} \Delta}_{(**)}. 
\end{aligned}
\label{eqn:penult_bnd}
\end{equation}
The term \((**)\) can be further bounded as 
$$ (**)  \le \alpha {\Delta} \sum_{p=0}^{\infty} (1 - \alpha(1 - \gamma))^{p} = \frac{{\Delta}}{1-\gamma}.$$
Plugging the above bound in~\eqref{eqn:penult_bnd}, on the event $\mc{\bar{E}}$, we have 
\begin{equation}
\bar{e}_K \leq (1 - \alpha(1 - \gamma))^{K-1} \bar{e}_0 + \frac{\Delta}{(1-\gamma)}.
\label{eqn:final_bnd}
\end{equation}
In the above step, we used the fact that since $d_{i,0}^{(0)}(s,a)$ is initialized as zero, $Q_{i, 1}$'s will also be zero-vectors $\forall i \in [N]$, implying $\bar e_1=\bar e_0$. To arrive at the final bound in Theorem \ref{thm:main_result}, set \(\alpha\) according to~\eqref{eqn:parameter_choice}. Under this choice, we have
\begin{equation}\medmath{
    (*) = (1 - \alpha(1 - \gamma))^{K-1} \bar{e}_0
    \le \exp{(-\alpha(1-\gamma)(K-1))} \bar{e}_0 = \bar{e}_0/(NT),}
\nonumber
\end{equation}
where we used $(1-x) \leq \exp(-x),$ for $x \in (0,1)$. Using  the expressions for $\Delta_1$ and $\Delta_2$, we then obtain
\begin{equation}\label{eqn:penultimate_step}
    \bar{e}_K \leq \frac{\bar{e}_0}{NT} + \tilde{\mathcal{O}}\!\left(\frac{\bar{R}}{(1-\gamma)^{\frac{5}{2}}\sqrt{NH}}\right) \;+\; 
        \mathcal{O}\!\left(\frac{N\bar R \rho^{L}}{(1-\gamma)^2}\right).
\end{equation}
Substituting \(H = T/K\) and the choices of \(K\) and \(L\) from~\eqref{eqn:parameter_choice} into~\eqref{eqn:penultimate_step} yields the following bound, which holds with probability at least \(1-\delta\):

\begin{equation}
\bar{e}_K \leq \frac{\bar{e}_0}{N T}
+ \mathcal{O}\left(
\frac{\bar{R}\sqrt{\log(NT)\log\left(\tfrac{2SAT}{\delta}\right)}}
{(1-\gamma)^{5/2}\sqrt{N T}}
\right).\label{eqn:ek}
\end{equation}
\noindent Finally, we bound the agent-wise sub-optimality error as $e_{i,K} \leq \bar e_K + \norm{\epsilon_{i,K}}$. Using the bound on $\norm{\epsilon_{i,k}}$ in~\eqref{eqn:epsilon}, and the choice of $L$ in~\eqref{eqn:parameter_choice}, it is easy to verify that the bound on $\bar e_K$ in~\eqref{eqn:ek} also applies to $e_{i,k}$ (up to universal constants). This completes our proof of Theorem \ref{thm:main_result}.
\end{proof}
We now provide the proof for Theorem~\ref{thm:timevar}. 
\begin{proof} [Proof of Theorem \ref{thm:timevar}] Since the proof of Theorem~\ref{thm:timevar} shares the same structure as that of Theorem~\ref{thm:main_result}, we only elaborate on the key distinction that arises in controlling the diffusion error. To that end, fix an epoch $k$,  a state-action pair $(s,a)$, and notice from~\eqref{eqn:diffusion_step_TV} that
\begin{equation}
d^{(\bar{L}B)}_k(s,a)= W_B(k; \bar{L}) \cdots W_B(k;1) d^{(0)}_k(s,a),
\label{eqn:TV1}
\end{equation}
where we define $W_B(k;\ell) := W_B(kH+ \ell B-1), \ell =1, \ldots, \bar{L}.$ We claim that the following is true $\forall \ell \in [\bar{L}]$:
\begin{equation}
 z^{(\ell B)}_k(s,a) = \tilde{W}_B(k; \ell) \cdots \tilde{W}_B(k;1) d^{(0)}_k(s,a), 
\label{eqn:TV2}
\end{equation}
where $z^{(\ell B)}_k(s,a): = d^{(\ell B)}_k(s,a) - \frac{\mathbf{1} \mathbf{1}^{\top}}{N} d^{(0)}_k(s,a)$, and $\tilde{W}_B(k;\ell) := W_B(kH+ \ell B-1) - {\mathbf{1}\mathbf{1}^{\top}}/{N}$. Assuming this claim to be true for now, let us complete the rest of the analysis. Taking the 2-norm on both sides of~\eqref{eqn:TV2} with $\ell$ set to $\bar{L}$, and using item (ii) in Assumption~\ref{ass:Time_Var}, we obtain
\begin{equation}
    \Vert z^{(\bar{L} B)}_k(s,a) \Vert_{\infty} \leq \Vert z^{(\bar{L} B)}_k(s,a) \Vert_2 \leq \omega^{\bar{L}} \Vert d^{(0)}_k(s,a) \Vert_2. 
\label{eqn:TV3}
\end{equation}
Owing to the double-stochasticity of the sequence $\{W(k)\}$, the exact same bound on the iterates as derived in \eqref{eq:diffusion_bound_correct} continues to apply; thus, combined with~\eqref{eqn:TV3}, it is easy to then verify that
$ 
\Vert z^{(\bar{L} B)}_k(s,a) \Vert_{\infty} \leq G,
$
where $G= \mathcal{O}\left( \omega^{\bar{L}} \sqrt{N} \bar{R}/(1-\gamma) \right)$. Recalling that $d_{j,k}^{(0)}(s,a) = \mathcal{T}_{j,k-1}Q_{j,k}(s,a)$, and noting that the above argument applies identically to every state-action pair, we conclude that 
$$ \Vert d^{(\bar{L}B)}_{i,k} - \frac{1}{N} \sum_{j=1}^{N} \mathcal{T}_{j,k-1}Q_{j,k} \Vert_{\infty} \leq G.$$
From the above display, one can bound each of the objects in items $(a)$ and $(b)$ of Lemma~\ref{lemma:deterministic_bound} by $\mathcal{O}(G)$. The rest of the analysis is identical to that of Theorem~\ref{thm:main_result}. We now justify~\eqref{eqn:TV2} by inducting on $\ell$. The base case with $\ell=1$ follows directly from the relation $d^{(B)}_k(s,a) = W_B(k;1) d^{(0)}_k(s,a)$ by adding and subtracting ${\mathbf{1} \mathbf{1}^{\top}}/{N}$ to $W_B(k;1)$. Now suppose the claim in~\eqref{eqn:TV2} holds for all $\ell =1, 2, \ldots , p-1$, where $p = 2, \ldots, \bar{L}$. To extend the claim to $\ell = p$, starting from $  d^{(pB)}_k(s,a) = W_B(k;p) d^{((p-1)B)}_k(s,a)$, observe that  
\begin{equation}
    \begin{aligned}
z^{(pB)}_k(s,a) &= W_B(k;p) d^{((p-1)B)}_k(s,a) - \frac{\mathbf{1} \mathbf{1}^{\top}}{N} d^{(0)}_k(s,a)\\
&\overset{(a)}= W_B(k;p) z^{((p-1)B)}_k(s,a)\\
&\overset{(b)}= \tilde{W}_B(k; p) \cdots \tilde{W}_B(k;1) d^{(0)}_k(s,a) + \Psi,
    \end{aligned}
\end{equation}
where $\Psi = ({\mathbf{1} \mathbf{1}^{\top}}/{N} )  \tilde{W}_B(k; p-1) \cdots \tilde{W}_B(k;1) d^{(0)}_k(s,a)$. In the above steps, $(a)$ follows by noting that $ W_B(k;p)$ is also doubly-stochastic in light of double-stochasticity of $\{W(k)\}$, and $(b)$ uses the induction hypothesis. To complete the proof, we need to argue that $\Psi=0$. For this, noting that $p \geq 2$, observe: $\frac{\mathbf{1} \mathbf{1}^{\top}}{N} \tilde{W}_B(k; p-1)= \frac{\mathbf{1} \mathbf{1}^{\top}}{N} \left({W}_B(k; p-1) -  \frac{\mathbf{1} \mathbf{1}^{\top}}{N} \right) = \left(\frac{\mathbf{1} \mathbf{1}^{\top}}{N} - \frac{\mathbf{1} \mathbf{1}^{\top}}{N} \right) =0,$ where we used $\mathbf{1}^{\top} W_B(k;p-1) = \mathbf{1}^{\top}.$ This establishes $\Psi=0$, thereby completing the proof.  
\end{proof}

\section{Conclusion}
We introduced a novel approach for distributed Q-learning over static and time-varying networks, and showed that collaborative speedups in sample-complexity can be achieved with just a logarithmic communication overhead. In future work, we plan to extend our approach to account for function approximation and Markov sampling. 
\bibliographystyle{IEEEtran}
\bibliography{references.bib}

\end{document}